\documentclass[sigconf]{acmart}

\renewcommand\footnotetextcopyrightpermission[1]{}
\setcopyright{none}

\usepackage{amsmath,amsthm}

\usepackage{amssymb}
\usepackage{algorithm,algorithmic}
\usepackage{enumitem}
\usepackage[most]{tcolorbox}

\makeatletter
\@ifundefined{theorem}{\newtheorem{theorem}{Theorem}}{}
\@ifundefined{lemma}{\newtheorem{lemma}[theorem]{Lemma}}{}
\@ifundefined{corollary}{}{}
\@ifundefined{proposition}{}{}
\@ifundefined{remark}{\newtheorem{remark}{Remark}}{}
\@ifundefined{assumption}{\newtheorem{assumption}{Assumption}}{}
\makeatother

\title{POLAR: Online Learning for LoRA Adapter Caching and Routing in Edge LLM Serving}

\author{Shaoang Li}
\affiliation{
  \institution{Stony Brook University}
  \country{}}
\email{shaoang.li@stonybrook.edu}

\author{Jian Li}
\affiliation{
  \institution{Stony Brook University}
  \country{}}
\email{jian.li.3@stonybrook.edu}

\begin{document}

\begin{abstract}
Edge deployment of large language models (LLMs) increasingly relies on libraries of lightweight LoRA adapters, yet GPU/DRAM can keep only a small resident subset at a time. Serving a request through a non-resident adapter requires paging its weights from storage, incurring measurable latency. This creates a two-timescale online control problem: on a slow timescale, the system selects which adapters remain resident in fast memory, while on a fast timescale it routes each request to an adapter whose context-dependent utility is unknown a priori. The two decisions are tightly coupled: the cache determines the cost of exploration, and the router determines which adapters receive informative feedback. We formulate this joint caching-and-routing problem as a two-timescale contextual bandit and propose \textsc{POLAR} (Paging and Online Learning for Adapter Routing). POLAR pairs a cache-aware LinUCB router with an epoch-based cache controller. We study two variants. A fixed-epoch version provides a robust baseline with worst-case regret guarantees under arbitrary contexts. An epoch-doubling version, \textsc{POLAR+}, adds forced exploration and improved cache optimization to achieve $\widetilde{\mathcal{O}}(d\sqrt{NT}+\sqrt{KT})$ sublinear regret under stochastic regularity and cacheability conditions, where $N$ is the adapter count, $K$ the cache size, $d$ the context dimension, and $T$ the horizon. The routing term matches the standard contextual-bandit rate up to logarithmic factors, showing that the memory hierarchy does not fundamentally slow routing learning. Experiments using 15 real LoRA adapters for Qwen2.5-7B together with measured GPU paging latencies show that adaptive cache control substantially outperforms non-adaptive baselines and exhibits scaling trends consistent with the theory.
\end{abstract}

\maketitle

\section{Introduction}\label{sec:intro}

Edge deployment of large language models (LLMs) is increasingly moving toward a modular design: a single base model is kept on device, while many downstream functionalities are implemented through lightweight adapters such as LoRA modules~\cite{hu2021lora}. This architecture is attractive because it avoids replicating the full base model for every task or tenant, and makes it practical to support diverse behaviors including translation, summarization, recommendation, code generation, and domain-specific assistance on resource-constrained edge servers. However, this modularity introduces a systems bottleneck that is easy to overlook: while the adapter library may contain tens or hundreds of specialized adapters, GPU/DRAM can hold only a small subset at any time. The rest remain on local storage and can be accessed only through a slower path that incurs nontrivial loading delay~\cite{sheng2024slora,shen2025edgelora,li2024caraserve}.

\begin{figure*}[t]
  \centering
  \includegraphics[width=0.95\textwidth]{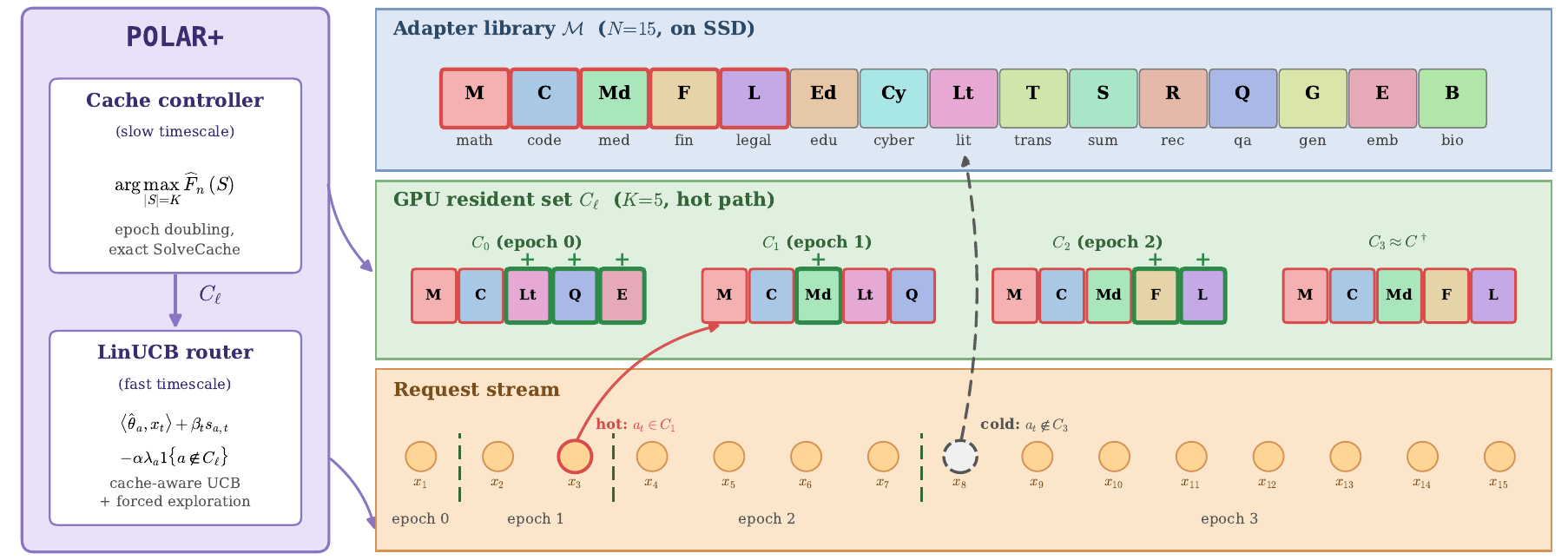}
  \vspace{-0.15in}
  \caption{Overview of \textsc{POLAR+}. The \emph{cache controller} (slow timescale) updates the resident set $C_\ell$ at each epoch boundary; the \emph{LinUCB router} (fast timescale) selects an adapter for each request $x_t$. The library $\mathcal{M}$ holds $N$ adapters on SSD; the GPU keeps a resident set of size $K$ that evolves $C_1 \to C_2 \to C_3 \to C_4 \approx C^\dagger$ as epoch lengths double (green ``$+$'' marks newly admitted adapters). Request $x_3$ hits a cached adapter (hot path); request $x_8$ needs an evicted adapter and pays the cold-path penalty.}
  \label{fig:arch}
  \vspace{-0.1in}
\end{figure*}

This memory-hierarchy constraint makes adapter residency a first-order systems decision rather than a mere implementation detail. In an interactive edge service, routing a request to a resident adapter yields a low-latency \emph{hot} execution path, whereas routing it to a non-resident adapter requires a \emph{cold} path, e.g., loading weights from SSD or flash into fast memory, which can substantially increase time-to-first-token and tail latency. Existing systems work has shown that LoRA loading and memory management are critical to end-to-end serving efficiency, but the missing online decision is \emph{which} adapters should remain resident in fast memory in the first place. That decision is inherently dynamic: the most valuable resident set depends on the request stream, and that value is not known in advance.

This yields a natural two-timescale architecture. On a slow timescale, the system control plane selects a resident working set of at most $K$ adapters to keep in GPU/DRAM. Frequent reconfiguration is undesirable because loading new adapters incurs overhead and can destabilize service, so the resident set should change only periodically. On a fast timescale, the data plane routes each incoming request to an adapter. Crucially, the request need not be restricted to the current resident set: a non-resident adapter may still be the best choice for the query, but selecting it incurs an additional latency penalty. The resulting problem is therefore neither standard caching nor standard routing. The cache determines which adapters are cheap to access, while routing determines which adapters are used and thus which ones generate informative feedback for future decisions.

The challenge is amplified by uncertainty. Adapter quality is request-dependent and typically unknown a priori: an adapter that performs well on one class of queries may be poor on another. Hence the system must learn, online, how adapter utility varies with context while simultaneously managing a scarce memory resource. This creates an endogenous feedback loop. If the resident set is poorly chosen, potentially useful adapters remain expensive to probe, slowing learning. If routing is shortsighted, the system may collect feedback only on currently cheap adapters and fail to discover better ones. In other words, the value of caching an adapter is not exogenous; it depends on the future routing policy, which itself depends on what has been learned so far.

This raises the central question of the paper:
\vspace{-0.1in}
\begin{tcolorbox}[colback=white!5!white,colframe=white!75!white]
\emph{How can an edge LLM service jointly learn which adapters to route requests to and which adapters to keep resident in fast memory, when adapter utility is context-dependent, cache updates operate on a slower control-plane timescale, and non-resident access incurs a measurable latency penalty?}
\end{tcolorbox}
\vspace{-0.15in}

This coupled control problem differs fundamentally from prior work in adjacent areas. If all adapters were equally accessible, it would reduce to contextual model selection. If adapter quality were known ahead of time, it would reduce to cache management with fixed values; and if the resident set could be changed on every request without cost, it would collapse to a one-timescale paging model. None of these abstractions captures practical edge LLM serving. Existing systems such as S-LoRA~\cite{sheng2024slora}, FastLibra~\cite{zhang2025fastlibra}, and EdgeLoRA~\cite{shen2025edgelora} optimize adapter loading and memory sharing, but do not study the online decision problem of jointly learning adapter utility and managing residency under request-dependent performance.

We study this problem through the lens of edge LLM serving and formulate it as a two-timescale online learning problem. The system hosts a fixed base LLM and a library of adapters on storage. Time is divided into epochs. At each epoch boundary, the controller selects a resident set of size at most $K$. During the epoch, each request arrives with a context vector, and the router selects an adapter from the full library. If the selected adapter is resident, the request follows the hot path; otherwise, it is served through the cold path and pays an additional latency penalty. We focus on the practically important regime in which the workload is approximately stable over the timescale of cache reconfiguration, so the target operating point is a steady resident set rather than continual cache churn. In this setting, the core online task is to learn which adapters deserve residency while still making good request-level routing decisions.

To address this challenge, we propose \textsc{POLAR} (\textbf{P}aging and \textbf{O}nline \textbf{L}earning for \textbf{A}dapter \textbf{R}outing), a framework that mirrors the system hierarchy (Figure~\ref{fig:arch}). \textsc{POLAR} combines a fast-timescale cache-aware LinUCB router with a slow-timescale cache controller that updates the resident set only at epoch boundaries. We study two variants. The first, \textsc{POLAR}, uses fixed-length epochs and greedy cache updates, and serves as a robust baseline that applies under arbitrary context sequences. The second, \textsc{POLAR+}, augments this design with forced exploration, exact cache optimization, and geometrically increasing epochs. The role of \textsc{POLAR+} is not merely to improve constants: it overcomes the persistent cache-identification difficulty that remains in the unrestricted setting, and under additional workload regularity and cacheability conditions it achieves sublinear regret.

The analysis reveals a technical obstacle that does not arise in either standard contextual bandits or standard online caching. In ordinary contextual bandits, uncertainty about an arm depends on how often that arm is sampled. Here, sampling itself is shaped by the memory hierarchy: resident adapters are cheaper to explore, while non-resident adapters are more expensive, so the cache changes the data-collection process. As a result, the slow-timescale cache decision is statistically coupled with the fast-timescale learner. Our results show that this coupling can nevertheless be controlled. For the fixed-epoch variant, we establish a worst-case regret guarantee under the linear contextual model, and show that the resulting linear cache gap is intrinsic under arbitrary contexts rather than an artifact of the proof. For \textsc{POLAR+}, we show that under stationary workload regularity and a natural cacheability condition, epoch doubling and forced exploration make the cumulative cache-identification error summable, yielding an overall sublinear regret bound. The routing term matches the standard contextual-bandit rate up to logarithmic factors, showing that the memory hierarchy does not fundamentally slow routing learning. To the best of our knowledge, this is the first such guarantee for joint adapter caching and contextual routing in edge LLM serving.

Our contributions are summarized as follows.
\begin{itemize}
\item We formulate edge LoRA serving as a two-timescale online control problem over a memory hierarchy, coupling slow resident-set selection with fast contextual routing under hot/cold latency asymmetry.
\item We propose \textsc{POLAR} and \textsc{POLAR+}, two two-timescale algorithms for joint adapter routing and cache placement. \textsc{POLAR} is a robust baseline under arbitrary contexts, while \textsc{POLAR+} adds epoch doubling, forced exploration, and improved cache optimization to obtain provably sublinear regret under additional structure.
\item We prove a worst-case regret bound for \textsc{POLAR} and show that its linear cache gap is intrinsic without workload regularity. We further prove a sublinear regret bound for \textsc{POLAR+} under stochastic regularity and cacheable diversity, with a routing term matching the standard contextual-bandit rate up to logarithmic factors.
\item We evaluate the proposed methods using real LoRA adapters on Qwen2.5-7B together with measured GPU paging latencies. The results show that adaptive cache control substantially reduces regret relative to non-adaptive baselines and exhibits scaling trends consistent with the theory.
\end{itemize}

\section{System Model}\label{sec:problem}

We consider an edge device hosting a fixed base LLM and a library of $N$ lightweight adapters (e.g., LoRA modules), indexed by $\mathcal{M}=\{1,\dots,N\}$. The full library resides on local storage, while only $1\le K\ll N$ adapters can be kept \emph{resident} in fast memory (GPU/DRAM). A non-resident adapter can still serve a request through a slower path, e.g., paging from storage into fast memory, incurring an additional latency penalty.

Time is slotted into rounds $t=1,\dots,T$, grouped into epochs $\ell=1,2,\dots$. At each round, the system observes a context $x_t\in\mathbb{R}^d$ with $\|x_t\|_2\le 1$. The problem operates on two timescales. On a \emph{slow} timescale, the resident set $C_\ell\subseteq\mathcal{M}$ with $|C_\ell|\le K$ is chosen at epoch boundaries and remains fixed during the epoch. On a \emph{fast} timescale, the system selects an adapter $a_t\in\mathcal{M}$ for each request, incurring low latency if $a_t\in C_{\ell(t)}$ (\emph{hot}), where $\ell(t)$ denotes the epoch containing round $t$, and an additional latency penalty otherwise (\emph{cold}). This two-timescale abstraction reflects practical edge serving stacks, where cache reconfiguration is a slower control-plane operation, while request routing is a fast data-plane decision.

\textbf{Reward model.} Each adapter $a$ has an unknown parameter $\theta_a^\star\in\mathbb{R}^d$ with $\|\theta_a^\star\|_2\le 1$, and a known cold-path penalty $\lambda_a>0$, which captures the additional latency of serving a request through a non-resident adapter. This penalty depends on the adapter and the memory/storage hierarchy, and is assumed measurable by profiling before deployment. The context $x_t$ encodes request features such as task type, input length, or domain, and $\theta_a^\star$ captures how adapter $a$ responds to these features. The observed quality is
\begin{align}
  q_t(a)=\langle \theta_a^\star,x_t\rangle+\eta_t(a),
  \label{eq:linmodel}
\end{align}
where $\eta_t(a)$ is conditionally $\sigma$-sub-Gaussian given $x_t$, and only $q_t(a_t)$ is observed. The per-round reward is
\begin{align}
  r_t(a_t;C_{\ell(t)})
  = q_t(a_t)-\alpha\lambda_{a_t}\mathbf{1}\{a_t\notin C_{\ell(t)}\},
  \label{eq:reward}
\end{align}
where $\alpha>0$ trades off service quality against latency. Thus, routing to a cold adapter is allowed, but pays an explicit penalty.

Reconfiguring the resident set between epochs also incurs overhead. We model this by a switching cost $\gamma |C_\ell\setminus C_{\ell-1}|$ with $\gamma>0$, which reflects the one-time cost of loading each newly admitted adapter into the cache at epoch boundaries. An online policy selects $\{a_t\}$ and $\{C_\ell\}$ to maximize
\begin{align}
  \max_{\pi}\mathbb{E}\!\left[\sum_{t=1}^T r_t(a_t;C_{\ell(t)})\!-\!\sum_{\ell\ge 2}\gamma |C_\ell\setminus C_{\ell-1}|\right]\quad \text{s.t.}~|C_\ell|\le K, \forall \ell.
\end{align}

\textbf{Benchmark and regret.} Define the noiseless reward
\begin{align}
  \mu_t(a;C):= \langle \theta_a^\star,x_t\rangle- \alpha\lambda_a\mathbf{1}\{a\notin C\}.
\end{align}
We benchmark against an oracle that knows $\{x_t\}_{t=1}^T$ and $\{\theta_a^\star\}_{a\in\mathcal{M}}$ in hindsight and selects a \emph{single fixed cache} $C_T^\star$ together with the optimal per-round routing rule under that cache. We benchmark against a fixed resident set for two reasons. First, frequent cache reconfiguration is costly and undesirable in practical edge systems, so the target operating point is typically a stable working set rather than continual cache churn. Second, in the approximately stationary regime, a fixed cache provides the appropriate steady-state benchmark: the online problem is not to track arbitrary drift, but to learn which adapters deserve residency while paying routing exploration and occasional reconfiguration costs.

The oracle value is
\begin{align}\label{eq:oracle}
  R^\star(T)= \max_{|C|=K}\sum_{t=1}^T \max_{a\in\mathcal{M}} \mu_t(a;C).
\end{align}
Accordingly, we define the pseudo-regret as the gap between the online policy and the best fixed-cache policy in hindsight, while charging the online policy for the cache reconfiguration overhead incurred during learning:
\begin{align}\label{eq:regret}
  \mathrm{Regret}(T)= R^\star(T)- \left[\sum_{t=1}^T \mu_t(a_t;C_{\ell(t)})- \sum_{\ell\ge 2}\gamma |C_\ell\setminus C_{\ell-1}|\right].
\end{align}
This regret depends only on the conditional mean rewards $\mu_t$, not on the noise realizations. Intuitively, it measures how far the online policy is from the best stationary cache-and-routing policy, plus the control-plane overhead required to discover that policy.

Under i.i.d. contexts, it is useful to define the \emph{population-optimal cache}
\begin{align}
  C^\dagger := \arg\max_{|C|=K} F(C),~\text{where}~ F(C) := \mathbb{E}_{x}\!\left[\max_{a\in\mathcal{M}} \mu(a;C;x)\right].
\end{align}
The hindsight benchmark $R^\star(T)/T$ is then the empirical counterpart of the population objective $F(C^\dagger)$. The gap between them is controlled by the uniform deviation bound in Lemma~\ref{lem:cacheid} in Appendix~\ref{app:polar_plus}, at rate $\mathcal{O}\big(\bar B\sqrt{K\log(eN/K)/T}\big)$, where $\bar B$ is a uniform per-round reward bound determined by $\alpha$ and $\lambda_{\max}$ (explicit form in Section~\ref{sec:analysis}).

\section{Algorithms}
\label{sec:algorithm}

We present two variants of \textbf{POLAR} (\textbf{P}aging and \textbf{O}nline \textbf{L}earning for \textbf{A}dapter \textbf{R}outing). Both follow the same two-timescale design. On the \emph{fast} timescale, a cache-aware LinUCB router selects an adapter for each request. On the \emph{slow} timescale, a cache controller updates the resident set only at epoch boundaries. The key difference between the two variants is how the resident set is updated and how exploration is enforced.

\subsection{POLAR: A Fixed-Epoch Baseline}

We begin with a simple and practical baseline, \textsc{POLAR} (Algorithm~\ref{alg:polar}), which uses fixed-length epochs of $H$ rounds. Within each epoch, the resident set remains fixed, and the router computes for every adapter a cache-aware upper-confidence score
\begin{align}
\mathrm{score}_t(a)= \langle \hat\theta_a,x_t\rangle +\beta_t\sqrt{x_t^\top V_a^{-1}x_t} -\alpha\lambda_a\mathbf{1}\{a\notin C_\ell\}.
\end{align}
The first two terms are the standard LinUCB estimate and uncertainty bonus, where $\hat\theta_a$ and $V_a$ are the ridge estimate and regularized design matrix maintained by the algorithm (explicit form in Section~\ref{sec:analysis}) and $\beta_t$ is the confidence radius specified in Lemma~\ref{lem:ucb}; the last term explicitly penalizes routing through the cold path. Thus, \textsc{POLAR} prefers resident adapters unless a non-resident adapter offers sufficient estimated quality gain to offset its latency penalty.

At the end of each epoch, POLAR updates the resident set using a greedy marginal-gain rule based on the previous epoch's contexts and current UCB estimates; the full pseudocode appears in Appendix~\ref{app:greedy}. Because this update incurs approximation loss and rarely selected adapters may remain poorly estimated, \textsc{POLAR} does not in general guarantee sublinear regret under our strongest benchmark (see Theorem~\ref{thm:wc}).

\begin{algorithm}[t]
\caption{\textsc{POLAR}: Fixed-Epoch Routing with Greedy Cache Updates}
\label{alg:polar}
\begin{algorithmic}[1]
\STATE \textbf{Input:} epoch length $H$, cache size $K$, ridge $\lambda>0$,
  confidence sequence $\{\beta_t\}$, latency weight $\alpha$, switching cost $\gamma$
\STATE \textbf{Init:} $V_a\leftarrow \lambda I_d$, $b_a\leftarrow 0$,
  $\hat\theta_a\leftarrow 0$ for all $a\in\mathcal{M}$;
  choose initial cache $C_1$ with $|C_1|\le K$
\FOR{epoch $\ell=1,2,\dots,\lceil T/H\rceil$}
  \IF{$\ell>1$}
    \STATE $C_\ell \leftarrow
      \textsc{GreedyCacheUpdate}\!\left(
      C_{\ell-1},\{\hat\theta_a,V_a\},
      \{x_t\}_{t\in \mathrm{epoch}\,\ell-1}\right)$
    \STATE Pay switching cost $\gamma |C_\ell\setminus C_{\ell-1}|$
  \ENDIF
  \FOR{$t=(\ell-1)H+1,\dots,\min\{\ell H,T\}$}
    \STATE Observe context $x_t$
    \FOR{each $a\in\mathcal{M}$}
      \STATE
      $\mathrm{score}_t(a)\leftarrow
      \langle \hat\theta_a,x_t\rangle
      + \beta_t\sqrt{x_t^\top V_a^{-1}x_t}
      - \alpha\lambda_a\mathbf{1}\{a\notin C_\ell\}$
    \ENDFOR
    \STATE Play $a_t\in\arg\max_{a}\mathrm{score}_t(a)$ and observe $q_t(a_t)$
    \STATE Update $V_{a_t}\leftarrow V_{a_t}+x_tx_t^\top$,
      $b_{a_t}\leftarrow b_{a_t}+q_t(a_t)x_t$,
      $\hat\theta_{a_t}\leftarrow V_{a_t}^{-1}b_{a_t}$
  \ENDFOR
\ENDFOR
\end{algorithmic}
\end{algorithm}

\subsection{POLAR+: Epoch Doubling with Forced Exploration}

While \textsc{POLAR} captures the basic two-timescale control logic, its regret can remain linear due to two effects. First, the greedy cache update introduces a constant-factor approximation loss at every epoch. Second, adapters that are rarely selected by the router may never accumulate sufficiently accurate estimates, which in turn prevents the cache controller from confidently identifying the best resident set. To address both issues, we introduce \textsc{POLAR+} (Algorithm~\ref{alg:polar_plus}), which is the main algorithmic variant studied in this paper. \textsc{POLAR+} strengthens \textsc{POLAR} in three ways.

\textbf{(i) Epoch doubling.} Epoch $\ell$ contains $2^\ell$ exploitation rounds. As the epoch index grows, parameter estimates become more accurate, while the number of cache updates grows only logarithmically in $T$. This reduces control-plane churn and is key to making the cumulative cache-identification error summable.

\textbf{(ii) Forced exploration.} At the start of each epoch, \textsc{POLAR+} executes a short forced exploration phase consisting of $Nm_\ell$ round-robin plays, where each adapter is sampled $m_\ell$ times independent of the current router estimates. This ensures every adapter receives informative observations, including adapters that would otherwise remain under-sampled because their current estimates or cold penalties make them unattractive. Operationally, this phase can be interpreted as a probing window inserted between cache updates.

\begin{algorithm}[t]
\caption{\textsc{POLAR+}: Epoch Doubling with Forced Exploration}
\label{alg:polar_plus}
\begin{algorithmic}[1]
\STATE \textbf{Input:} cache size $K$, ridge $\lambda>0$,
  confidence sequence $\{\beta_t\}$, latency weight $\alpha$,
  switching cost $\gamma$, failure level $\delta$, exploration constant $\kappa>0$.
\STATE \textbf{Init:} $V_a\leftarrow \lambda I_d$, $b_a\leftarrow 0$,
  $\hat\theta_a\leftarrow 0$ for all $a\in\mathcal{M}$;
  $c_0\leftarrow \lceil \log(6Nd/\delta)\rceil$
\FOR{epoch $\ell=0,1,2,\dots$ while $t\le T$}
  \STATE $m_\ell \leftarrow \kappa \cdot d(\ell+c_0)$
  \STATE \textbf{Forced exploration:}
  for the next $Nm_\ell$ rounds,
  observe $x_t$, play the pre-scheduled round-robin adapter $a_t$,
  observe $q_t(a_t)$, and update
  $V_{a_t}, b_{a_t}, \hat\theta_{a_t}$
  \STATE $C_\ell \leftarrow
  \textsc{SolveCache}(\hat F_{n_\ell+Nm_\ell},K)$
  \STATE Pay switching cost $\gamma |C_\ell\setminus C_{\ell-1}|$
  if $\ell>0$
  \FOR{the next $2^\ell$ rounds (or until $T$ is exhausted)}
    \STATE Observe context $x_t$
    \FOR{each $a\in\mathcal{M}$}
      \STATE
      $\mathrm{score}_t(a)\leftarrow
      \langle \hat\theta_a,x_t\rangle
      + \beta_t\sqrt{x_t^\top V_a^{-1}x_t}
      - \alpha\lambda_a\mathbf{1}\{a\notin C_\ell\}$
    \ENDFOR
    \STATE Play $a_t\in\arg\max_a \mathrm{score}_t(a)$ and observe $q_t(a_t)$
    \STATE Update $V_{a_t}\leftarrow V_{a_t}+x_tx_t^\top$,
      $b_{a_t}\leftarrow b_{a_t}+q_t(a_t)x_t$,
      $\hat\theta_{a_t}\leftarrow V_{a_t}^{-1}b_{a_t}$
  \ENDFOR
\ENDFOR
\end{algorithmic}
\end{algorithm}

\textbf{(iii) Improved cache optimization.} After the forced exploration phase, the cache controller selects the next resident set by maximizing an empirical cache utility $\hat F_n(S)$ over all size-$K$ sets. For the theory, we write this step abstractly as $\textsc{SolveCache}(\hat F_n,K)$. Formally,
\begin{align}
\hat F_n(S)&:=\frac{1}{n}\sum_{t=1}^{n}\max_{a\in\mathcal{M}} \hat\mu(a;S;x_t),\nonumber\\
\hat\mu(a;S;x) &:= \langle \hat\theta_a,x\rangle
-\alpha\lambda_a\mathbf{1}\{a\notin S\},
\end{align}
is the empirical cache utility evaluated using the current parameter estimates over all historical contexts; $n_\ell := \sum_{j<\ell}(Nm_j+2^j)$ denotes the number of rounds completed before epoch $\ell$. In the analysis of \textsc{POLAR+}, we instantiate \textsc{SolveCache} as the exact optimizer $\arg\max_{|S|=K}\hat F_n(S)$ in order to isolate the statistical difficulty of cache identification from approximation effects. In practice, this step can be implemented by the same greedy procedure as Algorithm~\ref{alg:greedy_cache} or by any standard fast solver for monotone submodular maximization. When the adapter library is moderate, exact optimization is also feasible at epoch boundaries. Under i.i.d. contexts and the structural cacheability conditions in Assumptions~\ref{ass:iid}--\ref{ass:diversity}, \textsc{POLAR+} achieves $\widetilde{\mathcal{O}}(\sqrt{T})$ sublinear regret; see Theorem~\ref{thm:ed}.

Finally, note that neither \textsc{POLAR} nor \textsc{POLAR+} needs to know the latent hot set $\mathcal{M}_{\mathrm{hot}}$ from Assumption~\ref{ass:diversity}. Both routing and caching are carried out over the full adapter library $\mathcal{M}$. Moreover, \textsc{POLAR+} is \emph{anytime}: all quantities are computed online from the current epoch index and past observations, without requiring prior knowledge of the horizon $T$.

\section{Regret Analysis}
\label{sec:analysis}

We now analyze the two algorithmic variants introduced in Section~\ref{sec:algorithm}. This section establishes regret guarantees for the joint caching-and-routing problem and highlights the technical mechanisms that make these guarantees possible. In particular, we explain why the problem is not a routine combination of contextual bandits and online caching, and how the two-timescale structure shapes both the upper bounds and their proof techniques.

At a high level, the main difficulty comes from the two-timescale coupling. The fast router must learn context-dependent adapter quality from bandit feedback, while the slow cache controller determines which adapters are cheap to access. This creates an endogenous feedback loop: the resident set changes the cost of exploration, and the routing policy determines which adapters receive informative observations. Consequently, caching is not merely a combinatorial action layered on top of a contextual bandit; it alters the data-collection process that drives learning.

We first analyze \textsc{POLAR} under arbitrary contexts and show that it achieves a worst-case regret guarantee, but with a linear cache gap. We then show that this linear term is not a proof artifact of the analysis of \textsc{POLAR}. Rather, it reflects the intrinsic difficulty of cache identification without additional workload structure. Next, we study \textsc{POLAR+} under stationary workload regularity and cacheability assumptions, and show that under this added structure the cache-identification cost becomes summable, yielding an overall sublinear regret bound.

Throughout the analysis, we use the following notation. Let $ V_{a,t} := \lambda I_d + \sum_{s<t}\mathbf{1}\{a_s=a\}x_sx_s^\top $ be the regularized design matrix of adapter $a$ up to round $t$, and $ \hat\theta_{a,t} := V_{a,t}^{-1} \sum_{s<t}\mathbf{1}\{a_s=a\}q_s(a)x_s $ its ridge estimate. Let $ s_{a,t}:=\sqrt{x_t^\top V_{a,t}^{-1}x_t} $ denote the corresponding prediction width. We write $ \bar B := 2+\alpha\lambda_{\max}, \lambda_{\max}:=\max_{a\in\mathcal{M}}\lambda_a, $ so that $\bar B$ is a uniform upper bound on the per-round pseudo-regret, since the quality difference is at most $2$ and the largest cold-path penalty is $\alpha\lambda_{\max}$.

\subsection{Worst-Case Guarantee for \textsc{POLAR}}
\label{subsec:wc}

We begin with the fixed-epoch baseline \textsc{POLAR}. This result applies under the basic linear contextual model with no distributional assumptions on the context sequence. It should therefore be viewed as a robustness baseline: even under arbitrary request sequences, \textsc{POLAR} controls the routing error and cache switching overhead. At the same time, Theorem~\ref{thm:wc} makes clear that without additional workload structure, the cache-identification cost can remain linear.

\begin{theorem}[Worst-case regret of \textsc{POLAR}]
\label{thm:wc}
Assume the linear model~\eqref{eq:linmodel}, with $\|\theta_a^\star\|_2\le 1$, $\|x_t\|_2\le 1$, conditionally $\sigma$-sub-Gaussian noise, ridge parameter $\lambda\ge 1$, and bounded cold-path penalties $\lambda_a\in(0,\lambda_{\max}]$. Then, with probability at least $1-\delta$, Algorithm~\ref{alg:polar} satisfies
\begin{align}
\mathrm{Regret}(T)\le \underbrace{2\beta_T\sqrt{2NdT\log\Bigl(1+\frac{T}{d\lambda}\Bigr)}}_{\text{routing}} +\underbrace{\alpha\lambda_{\max}T}_{\text{cache gap}} +\underbrace{\Bigl\lceil \frac{T}{H}\Bigr\rceil K\gamma}_{\text{switching}}.
\end{align}
\end{theorem}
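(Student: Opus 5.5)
The plan is to decompose the regret in \eqref{eq:regret} into three pieces that match the three terms of the bound. Let $C_T^\star$ be the hindsight-optimal cache in \eqref{eq:oracle}. For each round $t$, let $a_t^\star\in\arg\max_a \mu_t(a;C_T^\star)$ and $\tilde a_t\in\arg\max_a \mu_t(a;C_{\ell(t)})$. We then write
\begin{align*}
\mu_t(a_t^\star;C_T^\star)-\mu_t(a_t;C_{\ell(t)})
=\bigl[\mu_t(a_t^\star;C_T^\star)-\mu_t(\tilde a_t;C_{\ell(t)})\bigr]
+\bigl[\mu_t(\tilde a_t;C_{\ell(t)})-\mu_t(a_t;C_{\ell(t)})\bigr].
\end{align*}
The switching term is immediate. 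There are at most $\lceil T/H\rceil$ epoch boundaries, and each one admits at most $|C_\ell|\le K$ new adapters. The total switching cost is therefore at most $\lceil T/H\rceil K\gamma$.

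\textbf{Cache gap.} For any fixed $x_t$ and any two caches $C,C'$, the map $a\mapsto\mu_t(a;C)$ differs from $a\mapsto\mu_t(a;C')$ pointwise by at most $\alpha\lambda_a\le\alpha\lambda_{\max}$. Indeed, $\mu_t(a;C')\ge\langle\theta_a^\star,x_t\rangle-\alpha\lambda_a\ge\mu_t(a;C)-\alpha\lambda_{\max}$. Taking the maximum over $a$ gives $\max_a\mu_t(a;C_T^\star)-\max_a\mu_t(a;C_{\ell(t)})\le\alpha\lambda_{\max}$. Summing over rounds yields the $\alpha\lambda_{\max}T$ term, and this holds irrespective of how \textsc{GreedyCacheUpdate} behaves. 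This is the crude step that the paper later argues is unavoidable without workload structure.

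\textbf{Routing term.} We work on the event of Lemma~\ref{lem:ucb}, which has probability $1-\delta$ simultaneously over all $a\in\mathcal M$ and $t\le T$; we take the union over $N$ adapters to be absorbed into $\beta_t$. On this event, $|\langle\hat\theta_{a,t}-\theta_a^\star,x_t\rangle|\le\beta_t s_{a,t}$. Within epoch $\ell(t)$ the cache is fixed and the score is exactly a UCB for $\mu_t(\cdot;C_{\ell(t)})$, because the penalty term is known. The standard optimism argument then gives
\begin{align*}
\mu_t(\tilde a_t;C_{\ell(t)})\le\mathrm{score}_t(\tilde a_t)\le\mathrm{score}_t(a_t)\le\mu_t(a_t;C_{\ell(t)})+2\beta_t s_{a_t,t}.
\end{align*}
Summing and using monotonicity $\beta_t\le\beta_T$, we split by adapter and apply Cauchy--Schwarz twice:
\begin{align*}
\sum_t s_{a_t,t}\le\sum_a\sqrt{T_a\sum_{t:a_t=a}s_{a,t}^2}\le\sqrt{NT\max_a\sum_{t:a_t=a}\min(1,s_{a,t}^2)}.
\end{align*}
Here $T_a$ is the number of pulls of $a$. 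Since $\lambda\ge1$ and $\|x_t\|\le1$, we have $s_{a,t}^2\le1$, so the $\min$ is harmless. The elliptical potential lemma bounds each per-adapter sum by $2d\log(1+T_a/(d\lambda))\le 2d\log(1+T/(d\lambda))$. This gives $2\beta_T\sqrt{2NdT\log(1+T/(d\lambda))}$.

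\textbf{Main obstacle.} The delicate point is the routing step. The confidence set must be valid even though the sampling of each adapter is driven by the cache, which in turn depends on past estimates. We expect to handle this by noting that the self-normalized bound of Lemma~\ref{lem:ucb} holds for any adapted selection rule, since $C_\ell$ is measurable with respect to past data. We also need the per-adapter elliptical potential to be applied only on the rounds at which that adapter is played, which is why the Cauchy--Schwarz split over adapters, producing the $\sqrt{N}$ factor, is needed.
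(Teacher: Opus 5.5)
Your proposal is correct and follows essentially the same route as the paper's proof. Both add and subtract $\max_a\mu_t(a;C_{\ell(t)})$, bound routing by LinUCB optimism plus the elliptical potential, use a uniform $\alpha\lambda_{\max}$ per-round cache-gap bound, and charge $K\gamma$ per epoch boundary. The differences are cosmetic: you bound the cache gap by a direct pointwise comparison rather than through monotonicity down to the empty cache (Lemma~\ref{lem:submodular}), and you apply Cauchy--Schwarz per adapter rather than once over all rounds, which gives the same constants.
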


\textbf{Implications.} Theorem~\ref{thm:wc} separates the overall regret into three distinct components.

$\triangleright$ \textbf{Routing regret}: The first term is the standard statistical price of learning context-dependent adapter quality under bandit feedback. Up to constants and logarithmic factors, this is the familiar LinUCB term.

$\triangleright$ \textbf{Cache gap}: The second term measures the cost of operating with a possibly suboptimal resident set. This is the extra difficulty introduced by the memory hierarchy: even if the router were optimal for the current cache, the cache itself may still be wrong.

$\triangleright$  \textbf{Switching overhead}: The third term is the control-plane cost of reconfiguring the resident set at epoch boundaries. This term has no analog in ordinary contextual bandits which do not model slow-timescale resident-set updates.

Taken together, Theorem~\ref{thm:wc} shows that regret in edge adapter serving has three sources: a data-plane learning term, a cache-identification term induced by the memory hierarchy, and a control-plane switching term caused by cache reconfiguration.

\begin{remark}[Why standard analyses do not apply directly]
\label{rem:wc-nonstandard}
Classical contextual bandit analyses control routing regret for a fixed action set, but do not model a slow-timescale resident-set decision that changes the effective access cost of each arm; see, e.g., standard analyses of linear contextual bandits and LinUCB \cite{chu2011contextual,abbasi2011improved}. Conversely, online caching and paging analyses typically assume that item values are known or exogenous, rather than learned from policy-dependent feedback. Our setting lies between these two worlds: the cache changes the future cost of observing adapters, and the router determines which adapters generate informative samples for future cache updates. This feedback coupling is precisely what makes the analysis nonstandard.
\end{remark}

\begin{remark}[Why the linear cache term is intrinsic under arbitrary contexts]\label{rem:linear-intrinsic}
The linear cache term in Theorem~\ref{thm:wc} should not be interpreted as a weakness of \textsc{POLAR} alone. Rather, it reflects a genuine limitation of the unrestricted setting. Under arbitrary contexts, the long-run value of caching an adapter cannot, in general, be inferred from online observations alone, because the observations are themselves shaped by the current cache and routing policy. Without additional structure, the learner may never receive enough representative information to distinguish a globally valuable adapter from one that only appears useful under the current, possibly misleading, resident set. In this sense, the linear cache term captures the intrinsic difficulty of cache identification without workload regularity.
\end{remark}

\textbf{Practical guidance.} Theorem~\ref{thm:wc} also yields a simple systems-level tradeoff. A smaller epoch length $H$, equivalently a more frequent cache-update interval, allows the controller to react more quickly to newly learned information, but increases the switching term. A larger $H$ reduces reconfiguration overhead, but leaves the system with a stale resident set for longer. Moreover, when the cold-path penalties $\lambda_a$ are large, the cost of an incorrect resident set can dominate the routing regret, making cache identification the primary systems challenge. Thus, even the worst-case bound highlights why cache control and routing must be analyzed jointly rather than as separate components.

\textbf{Proof sketch.} The proof decomposes regret by adding and subtracting $\max_a \mu_t(a;C_{\ell(t)})$, which yields three terms corresponding to routing, cache mismatch, and switching. The routing term is handled by standard confidence and elliptic-potential arguments for LinUCB. The key simplification in the worst-case analysis is that the cache term is controlled by a uniform per-round bound: the gap between any two resident sets is at most $\alpha\lambda_{\max}$, so one need not estimate long-run cache utility under arbitrary contexts. The switching term simply counts epoch boundaries and charges at most $K\gamma$ per update. The full proof, together with the supporting confidence and submodularity lemmas, appears in Appendix~\ref{app:polar}.

\subsection{Sublinear Regret via Epoch Doubling and Cacheability (\textsc{POLAR+})}
\label{subsec:sublinear}

Theorem~\ref{thm:wc} shows that worst-case robustness alone is not enough to eliminate the linear cache gap. We now show that this gap disappears once the workload has stable statistical structure and the adapter library contains a meaningful cacheable subset. This is the regime most relevant to stationary edge services: request types recur over time, some adapters repeatedly serve distinct traffic segments well, and cold-path access is expensive enough that keeping the right adapters resident matters.

Theorem~\ref{thm:ed} shows that under this additional structure, \textsc{POLAR+} learns both \emph{which adapter to route to} and \emph{which adapters deserve residency} at sublinear cost. For analytical convenience we set $\kappa = 1$ throughout this section.

\begin{theorem}[Sublinear regret of \textsc{POLAR+}]\label{thm:ed}
Under Assumptions~\ref{ass:iid} and~\ref{ass:diversity}, and the conditions of Theorem~\ref{thm:wc}, Algorithm~\ref{alg:polar_plus} satisfies, with probability at least $1-\delta$,
\begin{align}
\mathrm{Regret}(T)=\widetilde{\mathcal{O}}\bigg(\underbrace{\sigma d\sqrt{NT}}_{\text{routing}}+\underbrace{\bar B\sqrt{KT}}_{\text{cache identification}}\bigg).
\end{align}
\end{theorem}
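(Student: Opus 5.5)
We split the regret in \eqref{eq:regret} into four pieces: forced-exploration rounds, routing error inside exploitation rounds, cache mismatch between $C_\ell$ and $C^\dagger$, and switching. The starting point is the same decomposition as in the proof of Theorem~\ref{thm:wc}. For every exploitation round we add and subtract $\max_a\mu_t(a;C_{\ell(t)})$ and $\max_a \mu_t(a;C^\dagger)$. We also add and subtract $T\,F(C^\dagger)$ to pass between the hindsight benchmark $R^\star(T)$ and the population objective. The number of epochs is $L=O(\log T)$, since $\sum_{\ell\le L}2^\ell\ge T$ forces $L\le\log_2 T+1$.

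\textbf{Exploration and switching.} Forced exploration costs at most $\bar B\sum_{\ell\le L}Nm_\ell=\bar B N d\,O\big((L+c_0)^2\big)=\widetilde{\mathcal O}(\bar BNd)$, which is lower order. Switching costs at most $\gamma K L=\widetilde{\mathcal O}(\gamma K)$.

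\textbf{Routing.} Restricted to exploitation rounds, the router is exactly the cache-aware LinUCB of Theorem~\ref{thm:wc} for the current cache. Extra observations only shrink $V_a^{-1}$, so the confidence ellipsoids of Lemma~\ref{lem:ucb} remain valid. Optimism together with the elliptic-potential lemma, summed over the $N$ adapters with Cauchy--Schwarz, gives $\sum_t[\max_a\mu_t(a;C_{\ell(t)})-\mu_t(a_t;C_{\ell(t)})]\le 2\beta_T\sqrt{2NdT\log(1+T/d\lambda)}=\widetilde{\mathcal O}(\sigma d\sqrt{NT})$, using $\beta_T=\widetilde{\mathcal O}(\sigma\sqrt d)$.

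\textbf{Cache identification.} This is the main obstacle. By Assumption~\ref{ass:iid}, contexts in epoch $\ell$ are i.i.d. and independent of $C_\ell$, which is chosen beforehand. So the conditional expected mismatch per round is $F(C^\dagger)-F(C_\ell)$, and a martingale Azuma bound adds only $\widetilde{\mathcal O}(\bar B\sqrt T)$. We bound $F(C^\dagger)-F(C_\ell)\le 2\sup_{|S|=K}|\hat F_n(S)-F(S)|$ by optimality of $C_\ell$ for $\hat F_n$, and split this into two parts.

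First, the statistical part $\sup_S|\hat F^\star_n(S)-F(S)|$, where $\hat F^\star_n$ uses the true $\theta^\star$. This is controlled by the uniform deviation bound of Lemma~\ref{lem:cacheid} at rate $\bar B\sqrt{K\log(eN/K)/n_\ell}$. The proof is a union bound over $\binom NK$ sets with Hoeffding. One subtlety is that $\hat F_n$ averages over all past contexts, which are i.i.d. and cache-independent.

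Second, the estimation part, which is at most $\max_a\frac1n\sum_t|\langle\hat\theta_a-\theta^\star_a,x_t\rangle|$. This is where forced exploration enters. After epoch $\ell$'s probing, each adapter has at least $\sum_{j\le\ell}m_j\gtrsim d(\ell+c_0)^2$ i.i.d. samples. By Assumption~\ref{ass:diversity}, and the covariance lower bound from Assumption~\ref{ass:iid} with matrix Chernoff (hence $c_0=\lceil\log(6Nd/\delta)\rceil$), we get $\lambda_{\min}(V_a)\gtrsim \rho\,d(\ell+c_0)^2$ w.h.p. for all $a,\ell$. Hence $\|\hat\theta_a-\theta_a^\star\|\lesssim\beta/\sqrt{\rho d}(\ell+c_0)$.

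I expect the crux to be using the cacheability condition, a margin or gap between $C^\dagger$ and any other $K$-set on the hot set $\mathcal M_{\mathrm{hot}}$. Once the estimation error falls below that margin, $C_\ell=C^\dagger$ exactly, so the estimation contribution is nonzero only in $O(1)$ early epochs, each of bounded length. Failing that, I would fall back to summing $2^\ell$ times the error.

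\textbf{Summation.} Summing over epochs with $n_\ell\ge 2^{\ell-1}$ gives $\sum_\ell 2^\ell\bar B\sqrt{K\log(eN/K)/2^{\ell}}=O(\bar B\sqrt{KT\log(eN/K)})$ as a geometric sum. Adding the $R^\star(T)-TF(C^\dagger)$ gap, again bounded via Lemma~\ref{lem:cacheid}, yields the $\bar B\sqrt{KT}$ term. A union bound over the confidence, Chernoff, and deviation events, with budget $\delta/3$ each, completes the proof.
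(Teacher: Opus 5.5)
Your routing, forced-exploration, switching, and statistical-deviation terms match the paper's treatment. The estimation part of the cache-identification term, however, has a genuine gap. The shortcut ``$C_\ell=C^\dagger$ exactly'' is not available. Cacheability (iv) in Assumption~\ref{ass:diversity} only separates $C^\dagger$ from caches $S\not\subseteq\mathcal{M}_{\mathrm{hot}}$, so once the error is small it gives you only $C_\ell\subseteq\mathcal{M}_{\mathrm{hot}}$ (Lemma~\ref{lem:cachebt}(i)). Other $K$-subsets of $\mathcal{M}_{\mathrm{hot}}$ may be arbitrarily close to, or tied with, $C^\dagger$. You therefore pay the full $H_\ell[F(C^\dagger)-F(C_\ell)]\le 2H_\ell\rho_n$ of Lemma~\ref{lem:cacheid} in every epoch.

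Your fallback then fails quantitatively. Forced exploration gives each arm only $\Theta(d(\ell+c_0)^2)$ samples, so your rate $\|\hat\theta_a-\theta_a^\star\|_2\lesssim\beta_T/(\sqrt{\sigma_{\min}d}\,(\ell+c_0))$ decays only like $1/\ell$. Consequently $\sum_\ell H_\ell\,\beta_T/(\sqrt{\sigma_{\min}d}\,(\ell+c_0))$ is at least of order $T\beta_T/(\sqrt{\sigma_{\min}d}\,(L+c_0))$, which is linear up to logarithms. Total forced exploration is polylogarithmic in $T$, so it can never by itself deliver the $2^{-\ell/2}$ decay that summability requires.

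The missing idea is that the exploitation rounds themselves must supply the informative samples. The paper first reaches, using forced exploration alone, a bootstrap epoch $\ell_\star$ at which two things hold: the bonus is below $\Delta_{\min}/2$ (Lemma~\ref{lem:bootstrap}), and $C_\ell\subseteq\mathcal{M}_{\mathrm{hot}}$. From then on the argument chains through three conditions:
\begin{itemize}
\item the margin condition (ii) forces the router to pick each hot $a$ on all of $X_a$ (Lemma~\ref{lem:selection});
\item coverage (i) yields $\gtrsim p_{\min}2^\ell$ such plays per epoch (Lemma~\ref{lem:explcount});
\item full-dimensionality (iii) turns these plays into $\lambda_{\min}(V_a)\gtrsim\sigma_{\min}p_{\min}2^{\ell-1}$.
\end{itemize}
This gives $\varepsilon_\ell\lesssim\beta_T/\sqrt{\sigma_{\min}p_{\min}2^{\ell-1}}$ for hot arms (Lemma~\ref{lem:l2}), and hence $\sum_\ell H_\ell\varepsilon_\ell=\mathcal{O}(\sqrt{T/p_{\min}})$.

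Non-hot adapters still have only forced samples, so you also need hot dominance (v). It shows that for hot caches both $\hat F_n$ and $F$ are maxima over hot arms only (Lemma~\ref{lem:cachebt}(ii)). Without it, your Lipschitz-of-max step, which ranges over all of $\mathcal{M}$, stays stuck at the $1/\ell$ rate.

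A minor point: that Lipschitz step actually gives $\frac1n\sum_t\max_a|\langle\hat\theta_a-\theta_a^\star,x_t\rangle|\le\max_a\|\hat\theta_a-\theta_a^\star\|_2$. Your $\max_a\frac1n\sum_t|\cdot|$ has the max and the average in the wrong order, although the $\ell_2$ bound you then apply covers both.
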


\textbf{Implications.} The routing term $\sigma d\sqrt{NT}$ matches the minimax lower bound $\Omega(d\sqrt{NT})$ for $N$-arm linear contextual bandits up to logarithmic factors and the noise level $\sigma$. This shows that the memory hierarchy does not fundamentally degrade routing efficiency: \textsc{POLAR+} learns adapter quality at the same rate as standard LinUCB, despite the additional caching layer. The cache-identification term $\bar B\sqrt{KT}$ is the extra price of learning which $K$ adapters deserve fast-memory residency. To the best of our knowledge, this is the first sublinear cache-identification guarantee for joint adapter caching and routing with unknown contextual utility.

\textbf{Regime interpretation.} Theorem~\ref{thm:ed} is especially informative across different operating regimes. When $K\ll N$ but the cold-path penalties are moderate, the routing term may dominate, and the main challenge is learning adapter quality. When cold-path access is expensive, the cache-identification term is the bottleneck, and learning the hot set is as important as learning per-request routing. In the degenerate case $K=N$, all adapters fit in memory and the cache term vanishes, reducing the problem to ordinary contextual routing. This matches the system intuition: when residency is no longer a constraint, the learning problem collapses to routing alone.

\textbf{What changes relative to Theorem~\ref{thm:wc}.} Compared with Theorem~\ref{thm:wc}, the linear cache gap becomes $\widetilde{\mathcal{O}}(\sqrt{T})$, and the switching overhead drops from $\mathcal{O}(T/H)$ to $\mathcal{O}(\log T)$ under epoch doubling. This improvement requires additional workload regularity and cacheability structure. Moreover, \textsc{POLAR+} is anytime and does not require prior knowledge of the horizon.

\textbf{Assumptions and operational meaning.} The sublinear result relies on two structural conditions that have clear interpretations in edge LLM serving. The first concerns workload stability. The second concerns whether the adapter library contains a meaningful hot set that is worth keeping resident.

\begin{assumption}[i.i.d. regularity]\label{ass:iid}
The contexts satisfy $x_t \stackrel{\text{i.i.d.}}{\sim}\mathcal{D}$ with
\begin{align}
\mathbb{E}[x_tx_t^\top] \succeq \sigma_{\min} I_d
\end{align}
for some $\sigma_{\min}>0$.
\end{assumption}

Assumption~\ref{ass:iid} formalizes a stationary workload over the cache-control timescale: the service repeatedly sees a stable mixture of request types, and the feature map contains enough variation to identify adapter quality. This assumption should be interpreted over the timescale of cache learning and reconfiguration, not as a claim that edge traffic is globally static over arbitrarily long horizons. Rather, it states that over the horizon on which the cache is being learned, the workload is stable enough to admit a meaningful steady-state resident set.

\begin{assumption}[Cacheable diversity]\label{ass:diversity}
There exists a subset $\mathcal{M}_{\mathrm{hot}}\subseteq\mathcal{M}$ with $|\mathcal{M}_{\mathrm{hot}}|\ge K$ such that for each $a\in\mathcal{M}_{\mathrm{hot}}$, the region
\begin{align}
X_a\!:=\!\left\{x\!:\!a\!=\!\arg\max_{a'\in\mathcal{M}}\mu(a';C;x)\text{ for all } C\subseteq \mathcal{M}_{\mathrm{hot}},\ |C|=K\right\}
\end{align}
satisfies:
\begin{enumerate}
\item[(i)] \textbf{Coverage:} $\Pr_{x\sim\mathcal{D}}(x\in X_a)\ge p_{\min}$.
\item[(ii)] \textbf{Margin:} for every $C\subseteq\mathcal{M}_{\mathrm{hot}}$ with $|C|=K$,
  \begin{align}
      \inf_{x\in X_a}\left[\mu(a;C;x)-\max_{a'\neq a}\mu(a';C;x)\right]\ge \Delta_{\min}>0.
    \end{align}
\item[(iii)] \textbf{Full-dimensionality:} $\mathbb{E}[xx^\top\mid x\in X_a]\succeq \sigma_{\min}I_d$.
\item[(iv)] \textbf{Cache separability:}
  \begin{align}
      F(C^\dagger)- \max_{\substack{|S|\le K\\ S\not\subseteq \mathcal{M}_{\mathrm{hot}}}} F(S) \ge \Delta_{\mathrm{cache}}>0.
   \end{align}
\item[(v)] \textbf{Hot dominance:}
 \begin{align}
      \alpha \cdot \min_{a\notin \mathcal{M}_{\mathrm{hot}}}\lambda_a > 2,
\end{align}
and we write
    \begin{align}
      \Delta_{\mathrm{dom}}:=\alpha \min_{a\notin \mathcal{M}_{\mathrm{hot}}}\lambda_a - 2 > 0.
    \end{align}
\end{enumerate}
\end{assumption}

Assumption~\ref{ass:diversity} captures the practical notion of a \emph{cacheable hot set}. In many edge services, traffic is skewed: a subset of adapters repeatedly serves recurring request segments, and those adapters are precisely the ones that should remain resident. The coverage and margin conditions say that each hot adapter is useful on a nontrivial part of the workload and is clearly better there. The cache-separability condition says that the optimal resident set is meaningfully better than any cache that includes non-hot adapters. Finally, the hot-dominance condition says that for non-hot adapters, the cold-path latency is large enough that any quality gain they might offer is not sufficient to offset the paging penalty. Operationally, this means the service has a meaningful steady-state working set and that cold execution is expensive enough to make residency matter. These assumptions should therefore be viewed not merely as technical devices, but as identifiability conditions for steady-state cache learning in a stationary edge service.

\begin{remark}[Why Assumption~\ref{ass:diversity} is not overly restrictive]\label{rem:assumption-practical}
Assumption~\ref{ass:diversity} requires diversity only for the cacheable hot subset, not for the entire adapter library. After the bootstrap phase, the algorithm's cache is shown to lie inside $\mathcal{M}_{\mathrm{hot}}$, so competition effectively occurs only among hot adapters. In particular, hot dominance ensures that cold adapters are never competitive under any hot cache once the estimation error is sufficiently small. This is the analog, in our two-timescale setting, of a margin condition in standard contextual bandits: it guarantees that the steady-state resident set is identifiable from online observations rather than requiring all adapters in the library to be equally learnable or equally relevant.
\end{remark}

\textbf{Practical guidance.} Theorem~\ref{thm:ed} suggests three design principles for edge LLM serving: cache updates should become less frequent as confidence improves; explicit probing is most valuable early, when the hot set is still being identified; and larger cold-path penalties increase the value of accurate cache selection. If memory is abundant enough to hold all adapters, the problem reduces to ordinary contextual routing.

\subsubsection{Proof Structure and Technical Insights}
${}$
\vspace{0.1cm}

\textbf{A more detailed decomposition.} The proof yields the following refined expansion:
\begin{align}\label{eq:refined-bound}
&\mathrm{Regret}(T)\!\leq\!\underbrace{2\beta_T\sqrt{2NdT\log\Bigl(1+\frac{T}{d\lambda}\Bigr)}}_{\text{routing}}
+ \underbrace{c_2\frac{\sigma\sqrt{d\log(NLT/\delta)}+\sqrt{\lambda}}{\sqrt{\sigma_{\min}p_{\min}}}\sqrt{T}}_{\text{cache estimation}} \nonumber\allowdisplaybreaks\\
&+ \underbrace{c_3\bar B\sqrt{KT\log\Bigl(\frac{eNL}{K\delta}\Bigr)}}_{\text{cache concentration}} + \underbrace{K\gamma L}_{\text{switching}} \!+\! \underbrace{\sum_{\ell}Nm_\ell \bar B}_{\text{forced exploration}} \!+\! \underbrace{2^{\ell_\star}\bar B}_{\text{bootstrap}},
\end{align}
where $L=\lceil \log_2 T\rceil$ and $\ell_\star$ is the bootstrap epoch defined later in the proof.

Each term has a distinct role. The first is the usual routing regret. The second is the price of estimating adapter utilities accurately enough for cache selection. The third is a uniform concentration term over all size-$K$ resident sets. The fourth is the total control-plane churn from epoch updates. The fifth is the explicit probing cost incurred by forced exploration. The sixth is a transient bootstrap term until the algorithm has collected enough information to identify the relevant hot set. The key point is that the first three terms are $\widetilde{\mathcal{O}}(\sqrt{T})$, while the last three are lower-order for fixed problem parameters.

\begin{remark}[Why standard proof techniques still do not apply]\label{rem:sublinear-nonstandard}
Even under Assumptions~\ref{ass:iid} and~\ref{ass:diversity}, the analysis is not a routine combination of contextual-bandit and caching arguments. The reason is that the resident set changes the information geometry of the problem. A non-resident adapter is not only more expensive to use; it is also less likely to be sampled by the optimistic router. Thus the algorithm may fail to learn about an adapter precisely because it is not resident, and may fail to cache it because it has not yet learned its value. This circular dependence does not appear in standard contextual bandits, nor in online caching with exogenous values.
\end{remark}

\textbf{Proof idea and technical novelties.} The proof has three conceptual stages.

$\triangleright$ \textbf{Stage 1: Bootstrap.} Forced exploration guarantees that every adapter receives i.i.d. samples regardless of the current cache. After a finite bootstrap epoch $\ell_\star$, two conditions hold simultaneously: (i) the routing confidence width is small enough to identify the best adapter on the relevant regions of the context space, and (ii) the empirical cache utility is accurate enough to distinguish hot caches from non-hot ones. This breaks the initial adverse feedback loop between cache and routing.

$\triangleright$ \textbf{Stage 2: Geometric convergence.} Once the bootstrap conditions hold, a virtuous cycle emerges. Better estimates improve routing and better routing yields more informative exploitation samples on hot regions. These samples sharpen the empirical cache utility, and better cache choices reduce cold-path penalties. Under epoch doubling, epoch $\ell$ has $2^\ell$ exploitation rounds while the residual per-epoch cache error decays as $\mathcal{O}(2^{-\ell/2})$. This is the key mechanism that turns a persistent cache gap into a summable one.

$\triangleright$ \textbf{Stage 3: Collecting terms.} The final regret bound follows by summing the routing term from the elliptic potential argument, the cache-estimation and concentration terms from the empirical cache utility, and the lower-order switching, probing, and bootstrap costs. The complete derivation appears in Appendix~\ref{app:polar_plus}.

\section{Experiments}
\label{sec:experiments}

We evaluate \textsc{POLAR} and \textsc{POLAR+} in a measurement-calibrated simulation using real LoRA adapters, measured GPU paging latencies, and benchmark-derived adapter utilities. The experiments are designed to validate three aspects of the theory: (i) the sublinear scaling of \textsc{POLAR+}, (ii) the role of epoch doubling, forced exploration, and improved cache optimization, and (iii) the ability of \textsc{POLAR+} to learn a high-quality resident set while keeping control-plane churn low.

\subsection{Experimental Setup}
\label{subsec:setup}

\textbf{Adapters and base model.} We use 15 community-contributed LoRA adapters for Qwen2.5-7B-Instruct~\cite{qwen2.5}, downloaded from HuggingFace and spanning 15 domains, including math, reasoning, medical, clinical NER, finance, legal, education, cybersecurity, and creative writing. Together with the base model, this gives $N=16$ arms. We include the base model as a special arm with $\lambda_{\text{base}}=0$, reflecting that it is always resident as part of the serving stack and therefore incurs no cold-path penalty; the cache controller thus effectively allocates its $K$ slots among the $15$ LoRA adapters. Adapters exhibit substantial heterogeneity in LoRA rank (8 to 64) and size (50 to 656 MB), which makes them suitable for studying memory-residency decisions under realistic edge constraints.

\textbf{Latency and quality calibration.} For each adapter $a$, we measure the cold-path latency $\lambda_a$ on an NVIDIA RTX~5080 GPU by loading the adapter through PEFT 20 times and recording the loading delay. The measured GPU load latencies range from 291 ms (rank-32, 50 MB) to 1263 ms (rank-64, 656 MB), confirming substantial heterogeneity in paging cost across adapters. In the simulation, $\lambda_a$ is expressed in seconds ($\lambda_a=\text{raw ms}/10^3$), matching the reward scale of the quality parameters $\theta_a^\star$ (normalized so that $\|\theta_a^\star\|_2\le 1$). To instantiate the quality model, each adapter is evaluated on five benchmarks (MMLU, ARC-Challenge, HellaSwag, TruthfulQA, and GSM8K) using log-probability scoring, yielding the adapter-specific utility parameters $\theta_a^\star$ used in the simulation. In particular, the measured adapter utilities differ meaningfully across domains: for example, the job-classifier adapter reduces MMLU accuracy from 0.70 to 0.42, while the cybersecurity adapter improves it to 0.74.

\textbf{Request generation and simulation.} Contexts follow a power-law task distribution as in EdgeLoRA~\cite{shen2025edgelora}. Thus, the evaluation should be understood as \emph{measurement-calibrated} rather than a literal trace replay: real adapter latencies and benchmark-derived utilities are combined with a synthetic but workload-informed request stream. Each request is associated with a context vector encoding task information, and rewards are generated according to the linear contextual model in Section~\ref{sec:problem}. This design allows us to study the online control problem under realistic paging costs while keeping the underlying utility model transparent and reproducible.

\textbf{Simulation parameters.} Unless otherwise stated, we use cache size $K=5$, context dimension $d=5$, epoch length $H=200$, noise level $\sigma=0.05$, switching cost $\gamma=0.3$, and ridge parameter $\lambda=1$. We set $\alpha=0.5$ so that the average cache-miss penalty is comparable to the average utility gap (penalty/quality ratio $\approx 1.1$). For \textsc{POLAR+}, forced exploration uses $\kappa=0.05$, corresponding to $Nm_\ell\approx 32$ rounds per epoch, about $2\%$ of the horizon. All reported results are means $\pm$ standard deviations over 5 random seeds.

\textbf{Baselines.} We compare against four online baselines that pair LinUCB routing with standard cache policies: \emph{$\varepsilon$-Greedy Cache} ($\varepsilon=0.1$ with greedy submodular cache updates), \emph{LFU Cache}, \emph{LRU Cache}, and \emph{Static Cache} (a random fixed cache with no updates). We additionally include an \emph{Oracle Cache + LinUCB} reference, in which the cache is fixed at the hindsight-optimal set $C_T^\star$ (defined in Section~\ref{sec:problem}) and LinUCB learns only the routing policy. This reference isolates the routing-side difficulty that remains even when cache learning is removed, and therefore serves as a routing-only lower reference for any online method that must learn the cache as well.

\begin{figure}[t]
  \centering
  \begin{minipage}[t]{0.48\columnwidth}
    \centering
    \includegraphics[width=\textwidth]{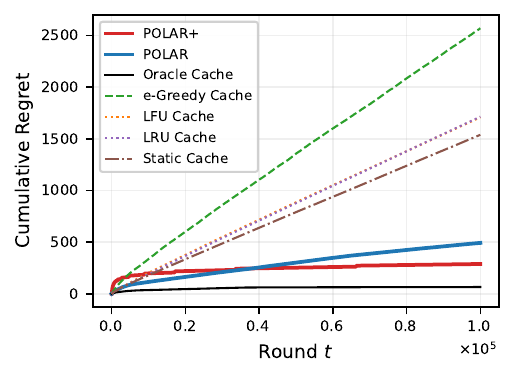}
    \Description{Cumulative regret trajectory at T=100K.}
    \vspace{-0.3in}
    \caption{Cumulative regret. \textsc{POLAR+} flattens; heuristic baselines grow linearly.}
    \label{fig:main_regret}
  \end{minipage}\hfill
  \begin{minipage}[t]{0.48\columnwidth}
    \centering
    \includegraphics[width=\textwidth]{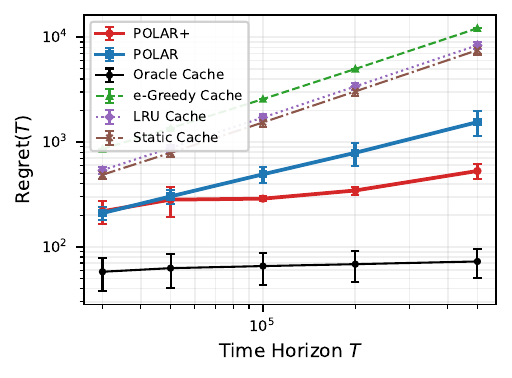}
    \Description{Log-log regret scaling to T=500K.}
    \vspace{-0.3in}
    \caption{Regret vs. $T$ (log-log). \textsc{POLAR+} overtakes \textsc{POLAR} at $T\approx 5\times 10^4$.}
    \label{fig:main_scaling}
  \end{minipage}
  \vspace{-0.2in}
\end{figure}

\subsection{Main Results}
\label{subsec:main_results}

Figure~\ref{fig:main_regret} reports cumulative regret up to $T=10^5$. Both \textsc{POLAR} and \textsc{POLAR+} significantly outperform the heuristic baselines, with \textsc{POLAR+} achieving the lowest regret among all online methods. Relative to the Oracle Cache reference, \textsc{POLAR+} incurs only a modest overhead for learning the resident set online, while substantially closing the gap between heuristic caching and the offline-optimal cache. At $T=10^5$, \textsc{POLAR+} reaches $288\pm 33$, compared with $493\pm 198$ for \textsc{POLAR}$,$ $1{,}538$ for the best heuristic baseline, and $66\pm 50$ for Oracle Cache. Its regret curve visibly flattens after $T\approx 3\times 10^4$, consistent with sublinear growth.

Figure~\ref{fig:main_scaling} shows regret as a function of $T$ on a log-log scale. \textsc{POLAR+} grows much more slowly than \textsc{POLAR} and the heuristic baselines, and its slope tracks Oracle Cache much more closely. In particular, \textsc{POLAR+} grows from $219$ to $530$ over a $16.7\times$ increase in the horizon, while \textsc{POLAR} and the heuristic baselines grow substantially faster. \textsc{POLAR+} overtakes \textsc{POLAR} near $T=5\times 10^4$, and the gap widens as the horizon grows. This behavior empirically supports the $\widetilde{\mathcal{O}}(\sqrt{T})$ scaling predicted by Theorem~\ref{thm:ed}.

\begin{figure}[t]
  \centering
  \begin{minipage}[t]{0.48\columnwidth}
    \centering
    \includegraphics[width=\textwidth]{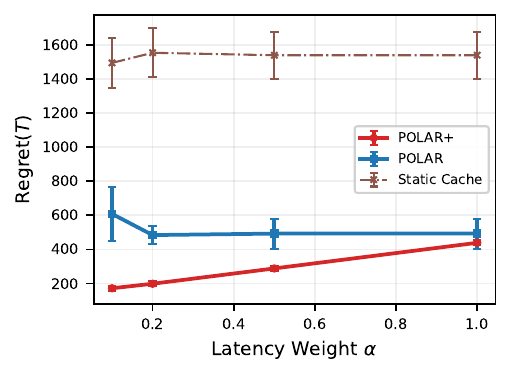}
    \Description{Regret vs latency weight alpha.}
       \vspace{-0.3in}
    \caption{Regret vs. $\alpha$.}
    \label{fig:sens_alpha}
  \end{minipage}\hfill
  \begin{minipage}[t]{0.48\columnwidth}
    \centering
    \includegraphics[width=\textwidth]{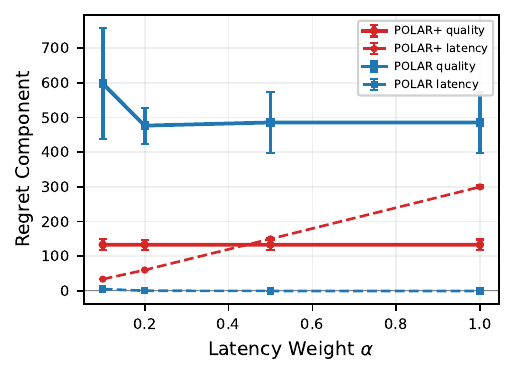}
    \Description{Regret decomposition: quality loss vs latency cost.}
       \vspace{-0.3in}
    \caption{Operational regret decomposition vs. $\alpha$. }
    \label{fig:decomp}
  \end{minipage}
    \vspace{-0.2in}
\end{figure}

\begin{figure*}[t]
  \centering
  \begin{minipage}[t]{0.19\textwidth}
    \centering
    \includegraphics[width=\textwidth]{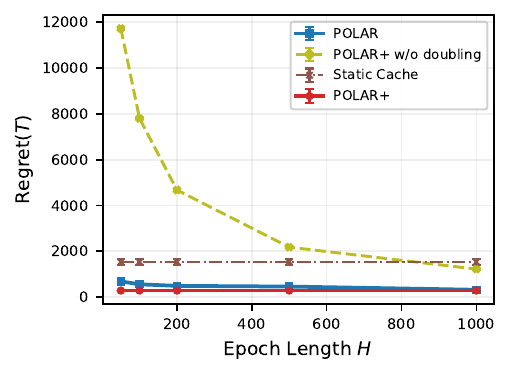}
    \Description{Regret vs epoch length H at T=100K.}
          \vspace{-0.3in}
    \caption{Regret vs. epoch length $H$.}
    \label{fig:sens_H}
  \end{minipage}\hfill
  \begin{minipage}[t]{0.19\textwidth}
    \centering
    \includegraphics[width=\textwidth]{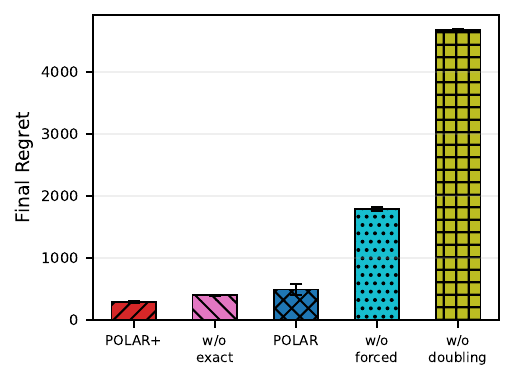}
    \Description{Final regret of each leave-one-out variant at T=100K.}
         \vspace{-0.3in}
    \caption{Leave-one-out final regret, $T=10^5$.}
    \label{fig:abl_components}
  \end{minipage}\hfill
  \begin{minipage}[t]{0.19\textwidth}
    \centering
    \includegraphics[width=\textwidth]{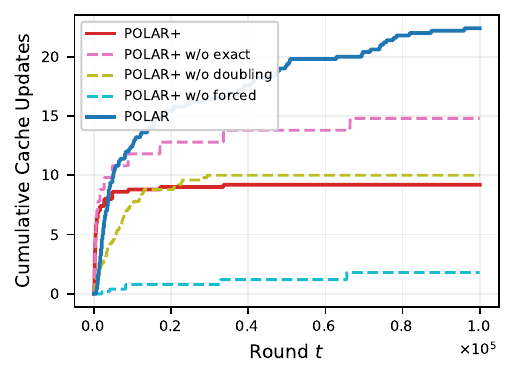}
    \Description{Cumulative cache updates vs t for each variant.}
   \vspace{-0.3in}
    \caption{Cumulative cache updates.}
    \label{fig:abl_switches}
  \end{minipage}\hfill
  \begin{minipage}[t]{0.19\textwidth}
    \centering
    \includegraphics[width=\textwidth]{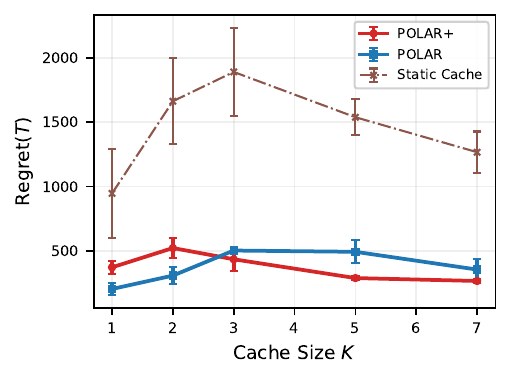}
    \Description{Regret vs cache size K.}
          \vspace{-0.3in}
    \caption{Regret vs. cache size $K$.}
    \label{fig:sens_K}
  \end{minipage}\hfill
    \begin{minipage}[t]{0.19\textwidth}
    \centering
    \includegraphics[width=\textwidth]{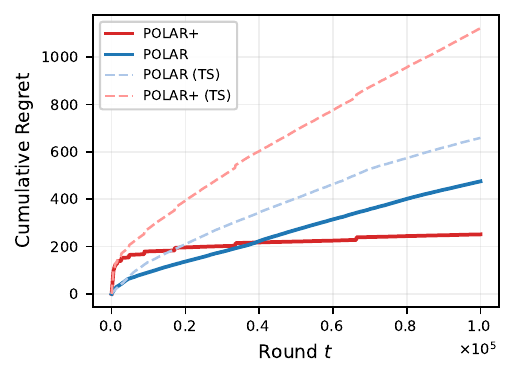}
  \Description{Ablation: UCB vs TS routing at T=100K.}
         \vspace{-0.3in}
  \caption{UCB vs TS routing, $T=10^5$.}
  \label{fig:abl_regret}
  \end{minipage}
        \vspace{-0.15in}
\end{figure*}

\subsection{Theory-Facing Validation}
\label{subsec:theory_validation}

\textbf{Sensitivity to latency weight $\alpha$.} Figure~\ref{fig:sens_alpha} varies $\alpha$ for $T=10^5$ with $\alpha\in\{0.1,0.2,0.5,1.0\}$. \textsc{POLAR}'s regret stabilizes once $\alpha\ge 0.5$: after the cache identifies valuable adapters, routing naturally avoids cold penalties. In contrast, \textsc{POLAR+}'s regret increases with $\alpha$, consistent with the $\bar B = 2+\alpha\lambda_{\max}$ dependence in Theorem~\ref{thm:ed}. Static Cache is largely insensitive to $\alpha$ because its random cache quality does not depend on the latency weight.

\textbf{Operational regret decomposition.} To connect the experiments to Theorem~\ref{thm:ed}, Figure~\ref{fig:decomp} decomposes cumulative regret into a \emph{quality loss} and a \emph{latency cost}. This is an operational decomposition rather than a literal term-by-term restatement of the theorem: quality loss reflects routing error, while latency cost captures the combined effect of cache misses and forced exploration. For \textsc{POLAR+}, the quality-loss component stays nearly constant at $\approx 134$ across all $\alpha$, confirming that the routing term $\sigma d\sqrt{NT}$ is independent of $\alpha$. Its latency cost grows approximately linearly ($34\to 300$), matching the predicted $\bar B\propto \alpha$ scaling. By contrast, \textsc{POLAR}'s latency cost is near zero because its cache-aware UCB routing naturally avoids uncached adapters, but its quality loss is $3.6\times$ higher ($486$ vs.\ $134$) due to insufficient exploration. This explains why \textsc{POLAR+} overtakes \textsc{POLAR} at larger $\alpha$: the cost of forced exploration grows, but the resulting improvement in utility estimation and cache selection more than compensates.

\textbf{Effect of cache-update cadence.} Figure~\ref{fig:sens_H} studies the fixed epoch length $H\in\{50,100,200,500,1000\}$ at $T=10^5$. \textsc{POLAR+} is shown as a flat reference because it does not use a fixed epoch size; instead, its update schedule is determined by epoch doubling. The \emph{w/o doubling} variant should be read as ``\textsc{POLAR+} with fixed epochs,'' i.e., it retains forced exploration and exact cache optimization but replaces geometric epoch growth with a fixed update interval $H$. Both \textsc{POLAR} and this \emph{w/o doubling} variant improve as $H$ grows over the tested range: \textsc{POLAR} drops from $687$ at $H=50$ to $323$ at $H=1000$, and \emph{w/o doubling} drops from $11{,}725$ to $1{,}225$. Small $H$ triggers more frequent greedy-cache updates (for \textsc{POLAR}) and more frequent forced-exploration phases (for \emph{w/o doubling}), both of which add cumulative cost. Even at the best tested $H=1000$, \textsc{POLAR} still lags \textsc{POLAR+} ($323$ vs.\ $288$) and \emph{w/o doubling} remains $4\times$ worse ($1{,}225$), showing that no fixed choice of $H$ recovers the self-tuning behavior of \textsc{POLAR+}. This provides an empirical counterpart to the switching and cache-identification improvements highlighted by Theorem~\ref{thm:ed}.

\textbf{Ablation of POLAR+ components.} Theorem~\ref{thm:ed} attributes \textsc{POLAR+}'s sublinear regret to three ingredients: \emph{epoch doubling}, \emph{exact cache optimization}, and \emph{forced exploration}. We verify that each is necessary by leave-one-out ablations at $T=10^5$. Removing one component at a time yields three variants. POLAR (fixed epoch + greedy cache + no forced exploration) serves as a lower reference. Figure~\ref{fig:abl_components} reports the final regret at $T=10^5$ for each variant. \textsc{POLAR+} achieves regret $288$; removing exact cache degrades it to $402$, indicating a moderate loss from replacing exact cache optimization with greedy updates. In contrast, removing epoch doubling inflates regret to $4{,}674$ ($16\times$ worse), while removing forced exploration yields $1{,}788$ ($6\times$ worse). These two ingredients are therefore especially load-bearing in practice. \textsc{POLAR} reaches $493$: higher than \textsc{POLAR+} but lower than either broken variant, because it pays no forced-exploration cost on top of a misconfigured schedule.

\textbf{Control-plane churn.} Figure~\ref{fig:abl_switches} reports the cumulative number of cache updates over time. \textsc{POLAR+} performs only $\approx 9$ updates across $10^5$ rounds, consistent with the $\mathcal{O}(\log T)$ cache-refresh schedule induced by epoch doubling. Exact cache is inherently stable: the \emph{w/o doubling} variant also performs only $\approx 10$ updates, showing that its large regret comes mainly from repeated forced-exploration rounds rather than from additional switching. In contrast, greedy cache is noticeably more churny: \emph{w/o exact} performs $\approx 15$ updates under doubling, and \textsc{POLAR} (fixed-epoch greedy) reaches $\approx 22$ updates, consistent with its $\mathcal{O}(T/H)$-style control-plane refresh. The \emph{w/o forced} variant switches only twice: without unbiased context samples, exact cache locks onto a biased but stable suboptimum early and never revises. This explains its persistent regret despite its low switching count. Together, Figures~\ref{fig:abl_components} and~\ref{fig:abl_switches} confirm that all three ingredients are load-bearing and epoch doubling is essential not only for regret scaling but for reducing control-plane churn.

\textbf{Learning the resident set.} Theorem~\ref{thm:ed} predicts that \textsc{POLAR+} should learn a near-optimal resident set, not merely a good routing rule. Figure~\ref{fig:cache_learning} tests this claim directly using two diagnostics: (i) the Jaccard similarity between the current cache $C_{\ell(t)}$ and the hindsight-optimal cache $C_T^\star$, and (ii) the quality loss incurred by using $C_{\ell(t)}$ as a fixed cache on the next $2{,}000$ contexts. The right panel is the more direct validation of the theorem's cache-identification objective, because Jaccard overlap is a strict set-level metric and can remain below one even when two caches induce very similar utility. Under this more operational measure, \textsc{POLAR+} converges rapidly to a high-quality resident set: its quality loss drops to $\approx 2.6$ by $t\approx 10^3$, a $6\times$ reduction from the initial value at $t=200$. \textsc{POLAR} eventually reaches a comparable Jaccard overlap, but only near $t\approx 3\times 10^4$, and its steady-state quality loss remains around $5.7$, roughly twice that of \textsc{POLAR+}. LRU and LFU, which react to request frequency rather than utility, stay near-constant at low Jaccard overlap and high quality loss throughout. Thus, while Jaccard is informative, the cache-quality-loss panel more directly shows that \textsc{POLAR+} identifies a near-optimal resident set much earlier than the alternatives.

\begin{figure}[t]
  \centering
  \includegraphics[width=\columnwidth]{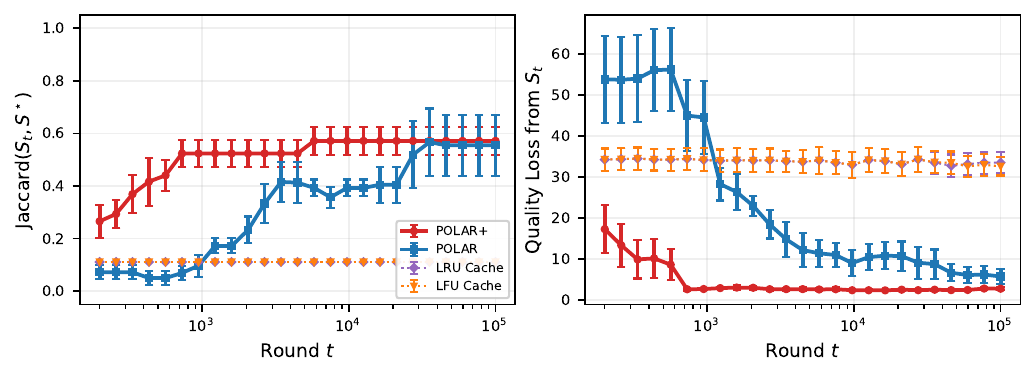}
  \Description{Jaccard(C_{ell(t)}, C_T*) and quality loss over t.}
        \vspace{-0.3in}
  \caption{Cache learning diagnostic. Left: $\mathrm{Jaccard}(C_{\ell(t)}, C_T^\star)$ over~$t$. Right: quality loss of using $C_{\ell(t)}$ on the next $2{,}000$ contexts. \textsc{POLAR+} converges rapidly to a high-quality resident set; frequency-based policies do not.}
  \label{fig:cache_learning}
        \vspace{-0.15in}
\end{figure}

\subsection{Additional Sensitivity and Ablation}
\label{subsec:additional_sensitivity}

\textbf{Cache size $K$.} Figure~\ref{fig:sens_K} varies cache size $K$ at $T=10^5$. All methods benefit from larger caches, but \textsc{POLAR+} benefits the most: its regret improves from $522$ at $K=2$ to $266$ at $K=7$ because a larger cache reduces the cost of maintaining the hot set. \textsc{POLAR} is less sensitive to $K$, as its UCB routing naturally avoids uncached adapters.

\textbf{Routing strategy.} We replace LinUCB with Linear Thompson Sampling~\cite{agrawal2013thompson} in both \textsc{POLAR} and \textsc{POLAR+}. Figure~\ref{fig:abl_regret} shows that the two routing strategies perform similarly within fixed-epoch \textsc{POLAR}, but \textsc{POLAR+} benefits more clearly from UCB. We attribute this to the more stable interaction between deterministic optimism, forced exploration, and cache identification. We view this as a router ablation rather than a central validation of the theory.

\section{Related Work}
\label{sec:related}

\textbf{LoRA adapter serving.} Recent systems work studies efficient multi-adapter LLM serving through cross-adapter batching and shared base-model execution~\cite{chen2024punica,wu2024dlora}, unified memory management and paging for adapter weights and KV caches~\cite{sheng2024slora,zhang2025fastlibra}, cold-start reduction and proactive prefetching~\cite{sui2025serverlesslora,ni2025predictive}, and edge- or hardware-aware designs for fast adapter switching~\cite{shen2025edgelora,tian2025clone}. These works optimize the \emph{systems layer} of multi-adapter serving, focusing on memory management, scheduling, and hardware efficiency. Our work operates at a different layer: given a mechanism for loading and evicting adapters, we study the online decision problem of which adapters to keep resident under uncertain, query-dependent utility.

\textbf{Contextual bandits and online routing for LLMs.} Recent work applies bandit methods to online LLM or backend selection~\cite{poon2025online,wei2025barp,li2025llmbandit,zhang2026adapter}. Closest in spirit on the learning side, Li and Li~\cite{li2026nearoptimal} study a two-timescale online deployment-and-routing problem for a streaming LLM inventory under per-query budget constraints, obtaining near-optimal $\widetilde{\mathcal{O}}(T^{2/3})$ regret. Their setting, however, is fundamentally different from ours: they consider deployment over multiple models in a non-contextual environment, whereas we study a fixed base model with a library of adapters whose context-dependent utility must be learned under a memory hierarchy with hot/cold access asymmetry, a coupling not captured by existing regret analyses. Related work also considers semantic caching and cache-assisted routing~\cite{zhu2023towards,liu2026semantic}, but these works typically cache responses or route among equally available backends. In contrast, POLAR studies a coupled problem in which request-level routing and size-$K$ adapter residency must be jointly optimized under memory constraints and paging-induced latency.

\textbf{Online caching.} Classical online paging and caching focus on competitive analysis under adversarial request sequences~\cite{sleator1985amortized,borodin2005online}, while more recent work studies learning-augmented or regret-minimizing caching policies under stochastic demand~\cite{paschos2020online,regretoptimal2024,cmab2024caching}. Xu et al.~\cite{xu2025mobile} use deep reinforcement learning for mobile-edge LLM caching and inference offloading. These methods target content objects, responses, or full models with known or externally specified value. They do not model the setting studied here, where adapter utility is both \emph{context-dependent} and \emph{unknown}, and where caching and routing are statistically coupled because the cache changes which adapters are cheap to explore and the router changes which adapters are observed.

\section{Conclusion}
\label{sec:conclusion}

We introduced POLAR, a two-timescale online learning framework for joint adapter routing and cache placement in edge LLM serving. POLAR combines a cache-aware router with an epoch-based cache controller, and admits both worst-case and sublinear regret guarantees under different operating regimes. Our analysis shows that memory-residency constraints can be handled without fundamentally slowing routing learning. Experiments with real LoRA adapters and measured paging latencies support the theory and show clear improvements over non-adaptive baselines.

\bibliographystyle{ACM-Reference-Format}
\bibliography{references}

@inproceedings{abbasi2011improved,
  title        = {Improved Algorithms for Linear Stochastic Bandits},
  author       = {Yasin Abbasi{-}Yadkori and
                  D{\'{a}}vid P{\'{a}}l and
                  Csaba Szepesv{\'{a}}ri},
  booktitle={Advances in Neural Information Processing Systems 24},
  pages={2312--2320},
  year={2011}
}

@article{tropp2012user,
  title={User-friendly tail bounds for sums of random matrices},
  author={Tropp, Joel A},
  journal={Foundations of computational mathematics},
  volume={12},
  number={4},
  pages={389--434},
  year={2012},
  publisher={Springer}
}

@inproceedings{
hu2021lora,
title={Lo{RA}: Low-Rank Adaptation of Large Language Models},
author={Edward J Hu and yelong shen and Phillip Wallis and Zeyuan Allen-Zhu and Yuanzhi Li and Shean Wang and Lu Wang and Weizhu Chen},
booktitle={International Conference on Learning Representations},
year={2022}
}

@article{sheng2024slora,
  title={S-lora: Serving thousands of concurrent lora adapters},
  author={Sheng, Ying and Cao, Shiyi and Li, Dacheng and Hooper, Coleman and Lee, Nicholas and Yang, Shuo and Chou, Christopher and Zhu, Banghua and Zheng, Lianmin and Keutzer, Kurt and others},
  journal={arXiv preprint arXiv:2311.03285},
  year={2023}
}

@article{zhang2025fastlibra,
  title={Improving the serving performance of multi-lora large language models via efficient lora and kv cache management},
  author={Zhang, Hang and Shi, Jiuchen and Wang, Yixiao and Chen, Quan and Shan, Yizhou and Guo, Minyi},
  journal={arXiv preprint arXiv:2505.03756},
  year={2025}
}

@inproceedings{ni2025predictive,
  title={Predictive-LoRA: A proactive and fragmentation-aware serverless inference system for LLMs},
  author={Ni, Yinan and Yang, Xiao and Tang, Yuqi and Qiu, Zhimin and Wang, Chen and Yuan, Tingzhou},
  booktitle={Proceedings of the 6th International Conference on Computer Science and Management Technology},
  pages={1267--1273},
  year={2025}
}

@inproceedings{shen2025edgelora,
  title={EdgeLoRA: An Efficient Multi-Tenant LLM Serving System on Edge Devices},
  author={Shen, Zheyu and He, Yexiao and Wang, Ziyao and Zhang, Yuning and Sun, Guoheng and Ye, Wanghao and Li, Ang},
  booktitle={Proceedings of the 23rd Annual International Conference on Mobile Systems, Applications and Services},
  pages={138--153},
  year={2025}
}

@inproceedings{tian2025clone,
  title={CLONE: Customizing LLMs for Efficient Latency-Aware Inference at the Edge},
  author={Tian, Chunlin and Qin, Xinpeng and Tam, Kahou and Li, Li and Wang, Zijian and Zhao, Yuanzhe and Zhang, Minglei and Xu, Chengzhong},
  booktitle={2025 USENIX Annual Technical Conference},
  pages={563--585},
  year={2025}
}

@inproceedings{poon2025online,
  title={Online multi-llm selection via contextual bandits under unstructured context evolution},
  author={Poon, Manhin and Dai, XiangXiang and Liu, Xutong and Kong, Fang and Lui, John CS and Zuo, Jinhang},
  booktitle={Proceedings of the AAAI Conference on Artificial Intelligence},
  volume={40},
  number={29},
  pages={24855--24863},
  year={2026}
}

@article{wei2025barp,
  title={Learning to Route LLMs from Bandit Feedback: One Policy, Many Trade-offs},
  author={Wei, Wang and Yang, Tiankai and Chen, Hongjie and Zhao, Yue and Dernoncourt, Franck and Rossi, Ryan A and Eldardiry, Hoda},
  journal={arXiv preprint arXiv:2510.07429},
  year={2025}
}

@article{li2025llmbandit,
  author       = {Yang Li},
  title        = {{LLM} Bandit: Cost-Efficient {LLM} Generation via Preference-Conditioned Dynamic Routing},
  journal       = {arXiv preprint arXiv:2502.02743},
  year         = {2025}
}

@inproceedings{li2026nearoptimal,
  title={Near-Optimal Online Deployment and Routing for Streaming {LLM}s},
  author={Li, Shaoang and Li, Jian},
  booktitle={The Fourteenth International Conference on Learning Representations},
  year={2026}
}

@article{sleator1985amortized,
  title={Amortized efficiency of list update and paging rules},
  author={Sleator, Daniel D and Tarjan, Robert E},
  journal={Communications of the ACM},
  volume={28},
  number={2},
  pages={202--208},
  year={1985}
}

@book{borodin2005online,
  title={Online computation and competitive analysis},
  author={Borodin, Allan and El-Yaniv, Ran},
  year={2005},
  publisher={cambridge university press}
}

@article{paschos2020online,
  title={Online convex optimization for caching networks},
  author={Paschos, Georgios S and Destounis, Apostolos and Iosifidis, George},
  journal={IEEE/ACM Transactions on Networking},
  volume={28},
  number={2},
  pages={625--638},
  year={2020}
}

@article{regretoptimal2024,
  title={Minimizing edge caching service costs through regret-optimal online learning},
  author={Quan, Guocong and Eryilmaz, Atilla and Shroff, Ness B},
  journal={IEEE/ACM Transactions on Networking},
  volume={32},
  number={5},
  pages={4349--4364},
  year={2024}
}

@article{xu2025mobile,
  title={Serving long-context LLMs at the mobile edge: Test-time reinforcement learning-based model caching and inference offloading},
  author={Xu, Minrui and Niyato, Dusit and Brinton, Christopher G},
  journal={IEEE Transactions on Networking},
  year={2026},
  publisher={IEEE}
}

@article{cmab2024caching,
  title={Online optimal service caching for multi-access edge computing: A constrained multi-armed bandit optimization approach},
  author={Chu, Weibo and Zhang, Xiaoyan and Jia, Xinming and Lui, John CS and Wang, Zhiyong},
  journal={Computer Networks},
  volume={246},
  pages={110395},
  year={2024}
}

@article{qwen2.5,
  author       = {An Yang and
                  Baosong Yang and
                  Beichen Zhang and
                  Binyuan Hui and
                  Bo Zheng and
                  Bowen Yu and
                  Chengyuan Li and
                  Dayiheng Liu and
                  Fei Huang and
                  Haoran Wei and
                  Huan Lin and
                  Jian Yang and
                  Jianhong Tu and
                  Jianwei Zhang and
                  Jianxin Yang and
                  Jiaxi Yang and
                  Jingren Zhou and
                  Junyang Lin and
                  Kai Dang and
                  Keming Lu and
                  Keqin Bao and
                  Kexin Yang and
                  Le Yu and
                  Mei Li and
                  Mingfeng Xue and
                  Pei Zhang and
                  Qin Zhu and
                  Rui Men and
                  Runji Lin and
                  Tianhao Li and
                  Tingyu Xia and
                  Xingzhang Ren and
                  Xuancheng Ren and
                  Yang Fan and
                  Yang Su and
                  Yichang Zhang and
                  Yu Wan and
                  Yuqiong Liu and
                  Zeyu Cui and
                  Zhenru Zhang and
                  Zihan Qiu},
  title        = {Qwen2.5 Technical Report},
  journal      = {arXiv preprint arXiv:2412.15115},
  year         = {2024}
}

@inproceedings{agrawal2013thompson,
  title  = {Thompson Sampling for Contextual Bandits with Linear Payoffs},
  author = {Shipra Agrawal and Navin Goyal},
  booktitle = {Proceedings of the 30th International Conference on Machine Learning},
  pages  = {127--135},
  year   = {2013}
}

@article{zhang2026adapter,
  title={Adapter-augmented bandits for online multi-constrained multi-modal inference scheduling},
  author={Zhang, Xianzhi and Xu, Yue and Zhu, Yinlin and Wu, Di and Zhou, Yipeng and Hu, Miao and Quan, Guocong},
  journal={arXiv preprint arXiv:2603.06403},
  year={2026}
}

@inproceedings{chen2024punica,
  author       = {Lequn Chen and
                  Zihao Ye and
                  Yongji Wu and
                  Danyang Zhuo and
                  Luis Ceze and
                  Arvind Krishnamurthy},
  title        = {Punica: Multi-Tenant LoRA Serving},
  booktitle    = {Proceedings of the Seventh Annual Conference on Machine Learning and Systems},
  year         = {2024}
}

@inproceedings{wu2024dlora,
  title={dLoRA: Dynamically orchestrating requests and adapters for LoRALLM serving},
  author={Wu, Bingyang and Zhu, Ruidong and Zhang, Zili and Sun, Peng and Liu, Xuanzhe and Jin, Xin},
  booktitle={18th USENIX Symposium on Operating Systems Design and Implementation (OSDI)},
  pages={911--927},
  year={2024}
}

@article{li2024caraserve,
  title={Caraserve: Cpu-assisted and rank-aware lora serving for generative llm inference},
  author={Li, Suyi and Lu, Hanfeng and Wu, Tianyuan and Yu, Minchen and Weng, Qizhen and Chen, Xusheng and Shan, Yizhou and Yuan, Binhang and Wang, Wei},
  journal={arXiv preprint arXiv:2401.11240},
  year={2024}
}

@article{sui2025serverlesslora,
  title={ServerlessLoRA: Minimizing Latency and Cost in Serverless Inference for LoRA-Based LLMs},
  author={Sui, Yifan and Wang, Hao and Yu, Hanfei and Hu, Yitao and Li, Jianxun},
  journal={arXiv preprint arXiv:2505.14468},
  year={2025}
}

@inproceedings{
zhu2023towards,
title={Towards Optimal Caching and Model Selection for Large Model Inference},
author={Banghua Zhu and Ying Sheng and Lianmin Zheng and Clark Barrett and Michael Jordan and Jiantao Jiao},
booktitle={Thirty-seventh Conference on Neural Information Processing Systems},
year={2023}
}

@inproceedings{liu2026semantic,
title={Semantic Caching for Low-Cost {LLM} Serving: From Offline Learning to Online Adaptation},
author={Liu, Xutong and Atalar, Baran and Dai, XiangXiang and Zuo, Jinhang and Wang, Siwei and Lui, John C.S. and Chen, Wei and Joe-Wong, Carlee},
booktitle={IEEE International Conference on Computer Communications (INFOCOM)},
year={2026}
}

@inproceedings{chu2011contextual,
title={Contextual Bandits with Linear Payoff Functions},
author={Chu, Wei and Li, Lihong and Reyzin, Lev and Schapire, Robert E.},
booktitle={Proceedings of the 14th International Conference on Artificial Intelligence and Statistics (AISTATS)},
pages={208--214},
year={2011}
}

\clearpage
\appendix
\clearpage
\onecolumn

\section{Notation Summary}
\label{app:notation}

Table~\ref{tab:notation} collects the notation used throughout the paper, grouped by role: system model and benchmark, structural assumptions, algorithm parameters, LinUCB router quantities, and proof-specific constructs introduced in Appendices~\ref{app:polar} and~\ref{app:polar_plus}.

\begin{table}[!ht]
\centering
\caption{Summary of notation.}
\label{tab:notation}
\small
\begin{tabular}{@{}ll@{}}
\toprule
\textbf{Symbol} & \textbf{Meaning} \\
\midrule
\multicolumn{2}{@{}l}{\emph{Sets, indices, and horizons.}} \\
$N,K,T,d$ & library size, cache size, time horizon, context dimension \\
$\mathcal{M}=\{1,\dots,N\}$ & adapter library \\
$t,\ell,\ell(t)$ & round index, epoch index, and epoch containing round $t$ \\
$L$ & $\lceil \log_2 T \rceil$, number of epochs used by \textsc{POLAR+} \\
\midrule
\multicolumn{2}{@{}l}{\emph{Contexts, adapters, and rewards.}} \\
$x_t\in\mathbb{R}^d,\ a_t\in\mathcal{M}$ & context (with $\|x_t\|_2\le 1$) and selected adapter at round $t$ \\
$\theta_a^\star\in\mathbb{R}^d$ & true parameter of adapter $a$, with $\|\theta_a^\star\|_2\le 1$ \\
$q_t(a),\eta_t(a),\sigma$ & observed quality~\eqref{eq:linmodel}, conditionally $\sigma$-sub-Gaussian noise, and its parameter \\
$\mu_t(a;C)$ & noiseless reward $\langle\theta_a^\star,x_t\rangle-\alpha\lambda_a\mathbf{1}\{a\notin C\}$ \\
$r_t(a;C)$ & realized reward $q_t(a)-\alpha\lambda_a\mathbf{1}\{a\notin C\}$ \\
$\lambda_a,\lambda_{\max}$ & cold-path latency of adapter $a$; $\lambda_{\max}=\max_{a\in\mathcal{M}}\lambda_a$ \\
$\alpha,\gamma$ & latency/quality trade-off weight; switching cost per admitted adapter \\
$\bar B$ & $2+\alpha\lambda_{\max}$, uniform per-round pseudo-regret bound \\
\midrule
\multicolumn{2}{@{}l}{\emph{Cache, benchmark, and regret.}} \\
$C_\ell\subseteq\mathcal{M}$ & resident set in epoch $\ell$, with $|C_\ell|\le K$ \\
$C_T^\star,C^\dagger$ & hindsight-optimal and population-optimal fixed caches; $C^\dagger=\arg\max_{|C|=K}F(C)$ \\
$F(C),\hat F_n(S)$ & expected cache utility $\mathbb{E}_x[\max_a\mu(a;C;x)]$ and its empirical counterpart from $n$ samples \\
$\hat\mu(a;S;x)$ & $\langle\hat\theta_a,x\rangle-\alpha\lambda_a\mathbf{1}\{a\notin S\}$ \\
$R^\star(T),\mathrm{Regret}(T)$ & oracle value and pseudo-regret (Eq.~\eqref{eq:regret}) \\
\midrule
\multicolumn{2}{@{}l}{\emph{Stochastic regularity and cacheable diversity.}} \\
$\mathcal{D},\sigma_{\min}$ & context distribution and minimum eigenvalue in Assumption~\ref{ass:iid} \\
$\mathcal{M}_{\mathrm{hot}}\subseteq\mathcal{M}$ & cacheable hot subset, with $|\mathcal{M}_{\mathrm{hot}}|\ge K$ \\
$X_a,p_{\min}$ & certified region for hot adapter $a$ and its coverage probability $\Pr_x(x\in X_a)\ge p_{\min}$ \\
$\Delta_{\min},\Delta_{\mathrm{cache}}$ & routing margin on $X_a$; cache-separability gap \\
$\Delta_{\mathrm{dom}}$ & hot-dominance gap, $\alpha\min_{a\notin\mathcal{M}_{\mathrm{hot}}}\lambda_a-2$ \\
\midrule
\multicolumn{2}{@{}l}{\emph{Algorithm parameters.}} \\
$H$ & fixed epoch length used by \textsc{POLAR} \\
$\lambda\ge 1,\delta\in(0,1)$ & ridge regularizer; target failure probability \\
$\kappa>0,c_0$ & forced-exploration constant (\textsc{POLAR+}); $c_0=\lceil\log(6Nd/\delta)\rceil$ \\
$m_\ell,M_\ell$ & per-arm forced plays $\kappa\,d(\ell+c_0)$ in epoch $\ell$; cumulative $\sum_{j\le\ell}m_j$ \\
$n_\ell,H_\ell$ & rounds completed before epoch $\ell$, $\sum_{j<\ell}(Nm_j+2^j)$; epoch-$\ell$ exploitation length $\min\{2^\ell,T-n_\ell-Nm_\ell\}$ \\
$T_E,|F|$ & total exploitation $\sum_\ell H_\ell$ and forced-exploration $\sum_\ell Nm_\ell$ rounds \\
$\textsc{SolveCache}$ & cache-optimization primitive ($\arg\max_{|S|=K}\hat F_n(S)$ in the analysis) \\
\midrule
\multicolumn{2}{@{}l}{\emph{LinUCB router quantities.}} \\
$V_a(V_{a,t}),b_a,\hat\theta_a(\hat\theta_{a,t})$ & design matrix $\lambda I_d+\sum_{s<t}\mathbf{1}\{a_s=a\}x_sx_s^\top$, response $\sum_{s<t}\mathbf{1}\{a_s=a\}q_s(a)x_s$, estimate $V_{a,t}^{-1}b_a$ \\
$s_{a,t},\beta_t$ & prediction width $\sqrt{x_t^\top V_{a,t}^{-1}x_t}$; LinUCB confidence radius (Lemma~\ref{lem:ucb}) \\
$\mathrm{score}_t(a)$ & cache-aware UCB score $\langle\hat\theta_a,x_t\rangle+\beta_t s_{a,t}-\alpha\lambda_a\mathbf{1}\{a\notin C_{\ell(t)}\}$ \\
\midrule
\multicolumn{2}{@{}l}{\emph{Proof-specific quantities (\textsc{POLAR+}).}} \\
$\ell_0,\ell_{\mathrm{cache}},\ell_\star$ & bootstrap epochs for the bonus (Lemma~\ref{lem:bootstrap}) and cache-identification (Lemma~\ref{lem:cachebt}) conditions; $\ell_\star=\max(\ell_0,\ell_{\mathrm{cache}})$ \\
$M_{a,\ell}^{\mathrm{pre}},\widetilde M_{a,\ell}$ & pre-exploitation plays of $a$ in epoch $\ell$; full-dimensional plays (forced plus $X_a$-certified exploitation) \\
$I_t^a$ & indicator $\mathbf{1}\{x_t\in X_a\}$ \\
$\hat\theta_{a,\ell}^{\mathrm{pre}}$ & ridge estimate from the $M_{a,\ell}^{\mathrm{pre}}$ pre-exploitation samples \\
$\varepsilon_\ell,\varepsilon_{\mathrm{all}}$ & $\max_{a\in\mathcal{M}_{\mathrm{hot}}}\|\hat\theta_{a,\ell}^{\mathrm{pre}}-\theta_a^\star\|_2$; $\max_{a\in\mathcal{M}}\|\hat\theta_a-\theta_a^\star\|_2$ \\
$\rho_n$ & uniform cache-utility deviation bound (Lemma~\ref{lem:cacheid}) \\
$F_{\mathrm{hot}}(S),\hat F_n^{\mathrm{hot}}(S)$ & hot-restricted cache utility $\mathbb{E}_x[\max_{a\in\mathcal{M}_{\mathrm{hot}}}\mu(a;S;x)]$ and its empirical counterpart \\
$c_1,c_2,c_3$ & absolute constants in the regret bound~\eqref{eq:refined-bound} \\
\bottomrule
\end{tabular}
\end{table}

\clearpage
\twocolumn

\section{Greedy cache utility maximization procedure}\label{app:greedy}

\begin{algorithm}[h]
\caption{\textsc{GreedyCacheUpdate}$(C_{\mathrm{prev}},\{\hat\theta_a,V_a\},\mathcal{D})$}
\label{alg:greedy_cache}
\begin{algorithmic}[1]
\STATE Precompute
$\mathrm{UCB}_t(a):=
\langle \hat\theta_a,x_t\rangle
+\beta_t\sqrt{x_t^\top V_a^{-1}x_t}$
for all $(t,a)$ in $\mathcal{D}$
\FOR{each $t$ in $\mathcal{D}$}
  \STATE $b_t\leftarrow \max_{a}
  (\mathrm{UCB}_t(a)-\alpha\lambda_a)$
  \hfill\COMMENT{all-cold baseline}
  \STATE $\Delta_t(a)\leftarrow
  \max\{0,\mathrm{UCB}_t(a)-b_t\}$ for all $a$
\ENDFOR
\STATE $C\leftarrow \emptyset$; \ $\mathrm{best}_t\leftarrow 0$ for all $t$
\FOR{$i=1$ to $K$}
  \STATE For each $a\notin C$, compute
  \[
    g(a)=
    \sum_t \max\{0,\Delta_t(a)-\mathrm{best}_t\}
    -\gamma \mathbf{1}\{a\notin C_{\mathrm{prev}}\}
  \]
  \STATE $a^\star\leftarrow \arg\max_{a\notin C} g(a)$;
  \textbf{break} if $g(a^\star)\le 0$
  \STATE $C\leftarrow C\cup\{a^\star\}$ and update $\mathrm{best}_t$
\ENDFOR
\RETURN $C$
\end{algorithmic}
\end{algorithm}

\section{Proof of Theorem~\ref{thm:wc} (POLAR)}
\label{app:polar}

We first state the supporting lemmas, then give the proof.

\begin{lemma}[UCB confidence ellipsoid]\label{lem:ucb}
Assume $\|\theta_a^\star\|_2\le 1$, $\|x_t\|_2\le 1$, and $\eta_t(a)$ is conditionally $\sigma$-sub-Gaussian. Then with probability $\ge 1-\delta$, for all $a\in\mathcal{M}$ and $t\le T$,
\[
  |x_t^\top(\hat\theta_{a,t}-\theta_a^\star)|\le\beta_t\,s_{a,t},
\]
where $\beta_t:=\sigma\sqrt{d\log(1+\tfrac{t}{d\lambda})+2\log\tfrac{N}{\delta}}+\sqrt{\lambda}$. Moreover, $\beta_t\ge 0$ and $\beta_{t_1}\le\beta_{t_2}$ whenever $t_1\le t_2$.
\end{lemma}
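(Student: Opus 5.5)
We reduce to the self-normalized martingale bound of \cite{abbasi2011improved}, applied separately to each adapter, and finish with a union bound over the $N$ adapters.

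\textbf{Step 1 (per-adapter martingale).} Fix $a\in\mathcal{M}$. Consider the rounds $s$ with $a_s=a$, and the filtration $\mathcal{F}_s=\sigma(x_1,a_1,q_1(a_1),\dots,x_s,a_s)$. The choice $a_s$ is $\mathcal{F}_{s-1}$-measurable given $x_s$. This holds both for the score argmax and for the pre-scheduled round-robin plays in Algorithm~\ref{alg:polar_plus}; in both cases the choice depends only on past data, the cache, and $x_s$.

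Define $S_{a,t}=\sum_{s<t}\mathbf{1}\{a_s=a\}\eta_s(a)x_s$. Since $\eta_s(a)$ is conditionally $\sigma$-sub-Gaussian given $\mathcal{F}_{s-1}$ and $x_s$, Theorem~1 of \cite{abbasi2011improved} applies with confidence $\delta/N$. It gives, simultaneously for all $t\ge1$,
\[
\|S_{a,t}\|_{V_{a,t}^{-1}}\le \sigma\sqrt{2\log\!\Bigl(\tfrac{\det(V_{a,t})^{1/2}}{\det(\lambda I)^{1/2}}\Bigr)+2\log\tfrac{N}{\delta}}.
\]

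\textbf{Step 2 (ridge decomposition).} The ridge estimate satisfies
\[
\hat\theta_{a,t}-\theta_a^\star=V_{a,t}^{-1}S_{a,t}-\lambda V_{a,t}^{-1}\theta_a^\star,
\]
so that
\[
\|\hat\theta_{a,t}-\theta_a^\star\|_{V_{a,t}}\le \|S_{a,t}\|_{V_{a,t}^{-1}}+\sqrt{\lambda}\|\theta_a^\star\|_2.
\]
The second term is at most $\sqrt{\lambda}$ because $\|\theta_a^\star\|_2\le1$.

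\textbf{Step 3 (determinant bound).} By AM--GM on the eigenvalues and $\|x_s\|\le1$, the number of plays of $a$ before round $t$ is at most $t$, so
\[
\det V_{a,t}\le(\lambda+t/d)^d .
\]
Hence $2\log(\det^{1/2}V_{a,t}/\lambda^{d/2})\le d\log(1+t/(d\lambda))$. Combining the three steps gives $\|\hat\theta_{a,t}-\theta_a^\star\|_{V_{a,t}}\le\beta_t$.

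\textbf{Step 4 (conclusion).} By Cauchy--Schwarz,
\[
|x_t^\top(\hat\theta_{a,t}-\theta_a^\star)|\le \|x_t\|_{V_{a,t}^{-1}}\,\beta_t=\beta_t s_{a,t}.
\]
A union bound over the $N$ adapters yields overall probability at least $1-\delta$, uniformly in $t\le T$. The anytime form of the self-normalized bound means no extra union over $t$ is needed. Finally, $\beta_t\ge0$ and $\beta_t$ is nondecreasing in $t$, because $\log(1+t/(d\lambda))$ is nondecreasing and the other terms are constant.

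\textbf{Main obstacle.} The delicate point is measurability. The data for adapter $a$ are collected adaptively, through the cache-dependent router and the forced-exploration schedule. We must check that each $\mathbf{1}\{a_s=a\}x_s$ is predictable with respect to the filtration under which $\eta_s(a)$ is sub-Gaussian. This holds because the cache and the scores depend only on past observations and the current context, and the noise is conditionally sub-Gaussian given $x_t$. Formally, we take $\eta_s(a)$ to be sub-Gaussian conditional on the full past together with $x_s$, which is the reading we adopt of the model assumption.
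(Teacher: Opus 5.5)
Your proof is correct and follows the paper's argument essentially step for step: the per-arm self-normalized bound of \cite{abbasi2011improved} at level $\delta/N$, the ridge decomposition $\hat\theta_{a,t}-\theta_a^\star=V_{a,t}^{-1}(S_{a,t}-\lambda\theta_a^\star)$ giving the $\sqrt{\lambda}$ bias term, the AM--GM bound on $\det V_{a,t}$, a union bound over arms only (the self-normalized bound is anytime), and Cauchy--Schwarz. Your explicit treatment of adaptivity, which reads the noise as sub-Gaussian conditional on the full past and $x_s$, is a point the paper leaves implicit; the only fix needed is an off-by-one in your filtration: condition $\eta_s$ on $\mathcal{F}_s$, which already contains $q_{s-1}(a_{s-1})$, $x_s$ and $a_s$, rather than on $\mathcal{F}_{s-1}$ together with $x_s$, since $a_s$ depends on $q_{s-1}(a_{s-1})$.
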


\begin{proof}
\emph{Step 1 (Per-arm bound).} Fix arm $a$. Let $S_{a,t}:=\sum_{s<t}\mathbf{1}\{a_s=a\}\eta_s(a)x_s$. Apply the self-normalized bound for vector-valued martingales~\cite[Theorem~1]{abbasi2011improved} with failure probability $\delta'=\delta/N$:
\[
\|S_{a,t}\|_{V_{a,t}^{-1}}
\le\sigma\sqrt{2\log\frac{\det(V_{a,t})^{1/2}\det(\lambda I)^{-1/2}}{\delta/N}}.
\]
Since $V_{a,t}(\hat\theta_{a,t}-\theta_a^\star)=S_{a,t}-\lambda\theta_a^\star$, the triangle inequality gives $\|\hat\theta_{a,t}-\theta_a^\star\|_{V_{a,t}} \le \|S_{a,t}\|_{V_{a,t}^{-1}} + \sqrt{\lambda}\|\theta_a^\star\|_2 \le \|S_{a,t}\|_{V_{a,t}^{-1}} + \sqrt{\lambda}$.

\emph{Step 2 (Log-det simplification).} AM--GM on eigenvalues gives $\det(V_{a,t})/\det(\lambda I)\le(1{+}t/(d\lambda))^d$. Hence the log-det term is at most $\tfrac{d}{2}\log(1{+}t/(d\lambda))$; substituting yields~$\beta_t$.

\emph{Step 3 (Union bound).} Taking a union over $N$ arms, $\Pr(\exists a:\text{bound fails})\le N\cdot\delta/N=\delta$. The self-normalized bound~\cite{abbasi2011improved} holds uniformly over all $t\ge 0$, so no additional union bound over time is needed.

\emph{Step 4 (Monotonicity).} $t_1\le t_2$ implies $\beta_{t_1}\le\beta_{t_2}$ by monotonicity of $\log$ and $\sqrt{\cdot}$.

\emph{Step 5 (Consequent).} Cauchy--Schwarz in the $V$-norm: $|x^\top u|\le\|u\|_V\sqrt{x^\top V^{-1}x}$.
\end{proof}

\begin{lemma}[Routing regret bound]\label{lem:routing}
On the event of Lemma~\ref{lem:ucb}, with $\lambda\ge 1$,
\[
  \sum_{t=1}^T\bigl[\max_a\mu_t(a;C_{\ell(t)})-\mu_t(a_t;C_{\ell(t)})\bigr]
  \le 2\beta_T\sqrt{2NdT\log(1+\tfrac{T}{d\lambda})}.
\]
\end{lemma}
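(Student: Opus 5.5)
The argument is the standard optimism-plus-elliptic-potential proof of LinUCB, run per arm. The one extra observation is that the cache-dependent penalty cancels in the score comparison, because during epoch $\ell(t)$ the cache is the same set $C_{\ell(t)}$ that appears in both the score and the benchmark $\max_a\mu_t(a;C_{\ell(t)})$.

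\emph{Step 1: per-round instantaneous regret.} Fix $t$ and let $a^\ast_t\in\arg\max_a\mu_t(a;C_{\ell(t)})$. On the event of Lemma~\ref{lem:ucb}, $\mathrm{score}_t(a)\ge \mu_t(a;C_{\ell(t)})$ for every $a$, since $\langle\hat\theta_{a,t},x_t\rangle+\beta_t s_{a,t}\ge\langle\theta_a^\star,x_t\rangle$ and the term $-\alpha\lambda_a\mathbf{1}\{a\notin C_{\ell(t)}\}$ is identical on both sides. Also $\mathrm{score}_t(a)\le\mu_t(a;C_{\ell(t)})+2\beta_t s_{a,t}$. Because $a_t$ maximizes the score,
\[
\mu_t(a^\ast_t;C_{\ell(t)})-\mu_t(a_t;C_{\ell(t)})\le \mathrm{score}_t(a_t)-\mu_t(a_t;C_{\ell(t)})\le 2\beta_t s_{a_t,t}\le 2\beta_T s_{a_t,t},
\]
where the last inequality uses the monotonicity of $\beta_t$ from Lemma~\ref{lem:ucb}. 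The forced exploration of \textsc{POLAR+} does not arise here; this lemma concerns Algorithm~\ref{alg:polar}, whose actions are all UCB-greedy.

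\emph{Step 2: split by arm and apply the elliptic potential.} Let $T_a$ be the number of rounds in which arm $a$ is played. Cauchy--Schwarz gives $\sum_t s_{a_t,t}\le\sqrt{T\sum_t s_{a_t,t}^2}$. Since $\lambda\ge1$ and $\|x_t\|\le1$, each $s_{a_t,t}^2\le1$, so $s^2\le 2\log(1+s^2)$. For each arm the matrix $V_{a}$ is updated exactly when $a$ is played. The usual determinant telescoping therefore gives
\[
\sum_{t:a_t=a}\log(1+s_{a,t}^2)=\log\frac{\det V_{a,T+1}}{\det \lambda I}\le d\log\Bigl(1+\frac{T_a}{d\lambda}\Bigr)\le d\log\Bigl(1+\frac{T}{d\lambda}\Bigr),
\]
with the AM--GM step as in Lemma~\ref{lem:ucb}. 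Summing over the $N$ arms bounds $\sum_t s_{a_t,t}^2$ by $2Nd\log(1+T/(d\lambda))$. Combining,
\[
\sum_t 2\beta_T s_{a_t,t}\le 2\beta_T\sqrt{2NdT\log(1+T/(d\lambda))},
\]
which is the claim.

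\emph{Main obstacle.} The only nonstandard point is checking that the cache-dependent penalty, which changes across epochs, introduces no bias. This holds because the regret is measured against $\max_a\mu_t(a;C_{\ell(t)})$ under the algorithm's own current cache, so the penalty cancels exactly in Step 1. The cache mismatch relative to $C_T^\star$ is deferred to the separate cache-gap term of Theorem~\ref{thm:wc}.

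A minor technicality is that $s_{a,t}$ in the score uses $V_{a,t}$ built from data before round $t$, which matches Lemma~\ref{lem:ucb}'s indexing. A second is that the sum of $\log\det$ over arms should be bounded by $d\log(1+T/(d\lambda))$ per arm rather than sharpened with $\sum_a T_a=T$. The cruder per-arm bound is exactly what produces the factor $N$ in the statement.
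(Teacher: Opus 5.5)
Your proof is correct and follows essentially the same route as the paper. Optimism on the confidence event gives the per-round gap $2\beta_T s_{a_t,t}$, with the cache penalty cancelling because both sides use $C_{\ell(t)}$; Cauchy--Schwarz and the per-arm elliptic potential bound $\sum_t s_{a_t,t}^2\le 2Nd\log(1+T/(d\lambda))$ then finish the argument. The only difference is that you spell out the determinant telescoping and the inequality $s^2\le 2\log(1+s^2)$ (valid since $\lambda\ge 1$ forces $s_{a,t}^2\le 1$), whereas the paper simply invokes the standard elliptic potential lemma.
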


\begin{proof}
On the confidence event, the LinUCB score $\langle\hat\theta_a,x_t\rangle+\beta_t s_{a,t} -\alpha\lambda_a\mathbf{1}\{a\notin C_{\ell(t)}\}$ upper-bounds $\mu_t(a;C_{\ell(t)})$ for all $a$. Since $a_t$ maximizes the score, the per-round gap satisfies $\Delta_t\le 2\beta_t\,s_{a_t,t}\le 2\beta_T\,s_{a_t,t}$. By Cauchy--Schwarz, $\sum_t\Delta_t\le 2\beta_T\sqrt{T\sum_t s_{a_t,t}^2}$. The elliptic potential lemma~\cite[Appendix~C]{abbasi2011improved} gives $\sum_t s_{a_t,t}^2 \le 2Nd\log(1+T/(d\lambda))$.
\end{proof}

\begin{lemma}[Submodularity of cache benefit]\label{lem:submodular}
The cache benefit $f(S):=\sum_{t=1}^H[\max_a\mu_t(a;S)-\max_a\mu_t(a;\emptyset)]$ is monotone non-decreasing and submodular.
\end{lemma}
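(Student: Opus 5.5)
Since $f$ is a sum over $t$ of per-round terms, and both monotonicity and submodularity are preserved under nonnegative sums (the subtracted constant $\max_a\mu_t(a;\emptyset)$ does not affect marginal gains), it suffices to prove that for a single round
\[
g_t(S):=\max_{a\in\mathcal{M}}\mu_t(a;S)
\]
is monotone and submodular in $S$.

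\textbf{Rewriting $g_t$.} Write $u_a:=\langle\theta_a^\star,x_t\rangle$ and $c_a:=u_a-\alpha\lambda_a$ for the hot and cold values of adapter $a$. Since $\alpha\lambda_a>0$, we have $u_a\ge c_a$. Put $b:=\max_{a}c_a$, the all-cold baseline (this is exactly $g_t(\emptyset)$). Then
\[
g_t(S)=\max\Bigl\{b,\ \max_{a\in S}u_a\Bigr\}=b+\max_{a\in S}w_a,\qquad w_a:=\max\{0,u_a-b\}.
\]
This holds because non-cached adapters contribute their cold values, all of which are at most $b$. Cached adapters contribute $u_a$, and $u_a\ge c_a$, so including their cold values $c_a$ in the baseline does not change the maximum. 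This step mirrors the $\Delta_t(a)$ construction in Algorithm~\ref{alg:greedy_cache}.

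\textbf{Facility-location argument.} The set function $S\mapsto\max_{a\in S}w_a$ (with $\max\emptyset:=0$) has nonnegative weights $w_a\ge 0$. It is therefore a one-customer facility-location function, which is monotone since adding an element can only raise a maximum. For submodularity, fix $A\subseteq B$ and $e\notin B$, and let $m_A=\max_{a\in A}w_a\le m_B=\max_{a\in B}w_a$. The marginal gains are $\max\{0,w_e-m_A\}$ and $\max\{0,w_e-m_B\}$, and the first is at least the second because $z\mapsto\max\{0,w_e-z\}$ is non-increasing. So $f(A\cup\{e\})-f(A)\ge f(B\cup\{e\})-f(B)$ after summing over $t$. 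Monotonicity also follows directly from the definition: enlarging $S$ can only remove cold penalties, so $\mu_t(a;S)$ is non-decreasing in $S$ for every $a$.

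The only delicate point is the rewriting step, specifically making sure that a cached adapter's cold value does not spuriously enter the baseline. This is resolved by the inequality $u_a\ge c_a$, which rests on $\lambda_a>0$. Everything else is the standard max-of-nonnegative-weights argument.
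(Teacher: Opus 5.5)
Your proof is correct, but it takes a different route from the paper. The paper never rewrites $\max_a\mu_t(a;S)$. Instead it proves the per-round inequality $\max_a\mu_t(a;S'\cup\{a_0\})+\max_a\mu_t(a;S)\le\max_a\mu_t(a;S\cup\{a_0\})+\max_a\mu_t(a;S')$ for $S\subseteq S'$ by an exchange argument. It fixes a maximizer $a_1$ of $\mu_t(\cdot;S'\cup\{a_0\})$ and a maximizer $a_2$ of $\mu_t(\cdot;S)$, then splits on two cases. If $a_1=a_0$, the hot value $\langle\theta_{a_0}^\star,x_t\rangle$ is matched by the cache $S\cup\{a_0\}$ and $a_2$ is matched by $S'$. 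If $a_1\neq a_0$, then $a_1$ is matched by $S'$ and $a_2$ by $S\cup\{a_0\}$. This uses only that $\mu_t(a;S)$ depends on $S$ through $\mathbf{1}\{a\in S\}$ and is non-decreasing in it.

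You instead derive the closed form $\max_a\mu_t(a;S)=b+\max_{a\in S}w_a$ with $w_a\ge 0$. This exhibits $f(S)=\sum_t\max_{a\in S}w_{t,a}$ as a normalized weighted facility-location function, after which submodularity is immediate because $z\mapsto\max\{0,w_e-z\}$ is non-increasing.

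The exchange argument is shorter and needs no baseline bookkeeping. Your reduction costs one extra check: that a cached adapter's cold value cannot lift the baseline. That needs only $\alpha\lambda_a\ge 0$, so it also covers the zero-penalty base arm in the experiments. In return your route gives more structure. It shows $f(\emptyset)=0$ and places $f$ in a well-understood class. It also makes explicit that the quantity $\sum_t\max\{0,\Delta_t(a)-\mathrm{best}_t\}$ computed in Algorithm~\ref{alg:greedy_cache} is exactly the marginal gain of this function, with UCB values in place of $\langle\theta_a^\star,x_t\rangle$. The paper's case analysis leaves that connection implicit.
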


\begin{proof}
Write $\mu_t(a;S)=v_t(a)-\mathrm{pen}(a)\cdot\mathbf{1}\{a\notin S\}$ where $v_t(a):=\langle\theta_a^\star,x_t\rangle$ and $\mathrm{pen}(a):=\alpha\lambda_a\ge 0$.

\emph{Pointwise monotonicity.} $S\subseteq S'$ implies $\mu_t(a;S)\le\mu_t(a;S')$ for all $a$, hence $\max_a\mu_t(a;S)\le\max_a\mu_t(a;S')$.

\emph{Single-step submodularity.} Fix $S\subseteq S'$ and $a_0$; let $b$ and $c$ be the maximizers of $\mu_t(\cdot;S'\cup\{a_0\})$ and $\mu_t(\cdot;S)$, respectively.

\textbf{Case $b=a_0$:} $\mu_t(b;S'\cup\{a_0\})=v_t(a_0) \le\max_a\mu_t(a;S\cup\{a_0\})$ (since $a_0\in S\cup\{a_0\}$), and $\max_a\mu_t(a;S')\ge\mu_t(c;S')\ge\mu_t(c;S)$.

\textbf{Case $b\ne a_0$:} $\max_a\mu_t(a;S')\ge\mu_t(b;S')=\mu_t(b;S'\cup\{a_0\})$, and $\max_a\mu_t(a;S\cup\{a_0\})\ge\mu_t(c;S)$.

Both cases yield
\begin{align*}
&\max_a\mu_t(a;S'\!\cup\!\{a_0\})+\max_a\mu_t(a;S)\\
&\quad\le\max_a\mu_t(a;S\!\cup\!\{a_0\})+\max_a\mu_t(a;S').
\end{align*}
Summing over $t$ gives submodularity.
\end{proof}

\begin{proof}[Proof of Theorem \ref{thm:wc}.]
Conditioning on the event of Lemma~\ref{lem:ucb}, we write
\begin{align*}
\mathrm{Regret}(T)
&=\sum_t[\max_a\mu_t(a;C_T^\star)-\mu_t(a_t;C_{\ell(t)})]\\
&\quad+\sum_{\ell\ge 2}\gamma|C_\ell\setminus C_{\ell-1}|.
\end{align*}
Adding and subtracting $\max_a\mu_t(a;C_{\ell(t)})$ gives $\mathrm{Regret}(T)=\text{(I)}+\text{(II)}+\text{(III)}$.

\textbf{(I)} is bounded by Lemma~\ref{lem:routing}.

\textbf{(II):} By monotonicity (Lemma~\ref{lem:submodular}),
\begin{align*}
&\max_a\mu_t(a;C_T^\star)-\max_a\mu_t(a;C_{\ell(t)})\\
&\quad\le\max_a\mu_t(a;C_T^\star)-\max_a\mu_t(a;\emptyset)
\le\alpha\lambda_{\max},
\end{align*}
since removing all cached arms costs at most $\alpha\lambda_{\max}$ per round. Summing gives $\alpha\lambda_{\max}T$.

\textbf{(III):} At most $\lceil T/H\rceil$ epoch boundaries, each costing $\le K\gamma$.
\end{proof}

\section{Proof of Theorem~\ref{thm:ed} (POLAR+)}
\label{app:polar_plus}

\begin{lemma}[Forced exploration controls bonus]\label{lem:bonus}
Under~\ref{ass:iid}, after $r$ cumulative rounds of forced exploration (round-robin) for arm $a$, with probability $\ge 1-\delta'$: $\lambda_{\min}(V_{a})\ge\lambda+\sigma_{\min}r/2$, and consequently $\beta_t\,s_{a,t}\le\beta_t/\sqrt{\lambda+\sigma_{\min}r/2}$ for all $\|x_t\|\le 1$.
\end{lemma}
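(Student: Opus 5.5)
The plan is a matrix Chernoff bound on the forced-exploration samples, followed by a deterministic step converting the eigenvalue bound into a bound on the prediction width. Fix arm $a$ and let $x_{(1)},\dots,x_{(r)}$ be the contexts at the $r$ forced-exploration rounds assigned to $a$.

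\textbf{Independence of the forced samples.} Because the round-robin schedule is pre-specified and does not depend on the data, the indices of these rounds are deterministic. By Assumption~\ref{ass:iid}, the contexts $x_{(i)}$ are therefore i.i.d. draws from $\mathcal{D}$, whatever happened in other rounds.

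\textbf{Chernoff step.} Set $Y_i:=x_{(i)}x_{(i)}^\top$. Each $Y_i$ is positive semidefinite with $\lambda_{\max}(Y_i)\le\|x_{(i)}\|_2^2\le 1$, and $\sum_i\mathbb{E}Y_i\succeq r\sigma_{\min}I_d$, so $\mu_{\min}:=\lambda_{\min}(\sum_i\mathbb{E}Y_i)\ge r\sigma_{\min}$. The lower-tail matrix Chernoff inequality of \cite{tropp2012user} with deviation parameter $1/2$ gives
\[
\Pr\Bigl(\lambda_{\min}\bigl(\textstyle\sum_i Y_i\bigr)\le \tfrac12\mu_{\min}\Bigr)\le d\,(e^{-1/2}/(1/2)^{1/2})^{\mu_{\min}}\le d\,e^{-\mu_{\min}/10}.
\]
This is at most $\delta'$ once $r\sigma_{\min}\ge 10\log(d/\delta')$. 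I expect this threshold to be where $c_0=\lceil\log(6Nd/\delta)\rceil$ and the schedule $m_\ell=\kappa d(\ell+c_0)$ enter: they are meant to make the cumulative count $M_\ell=\sum_{j\le\ell}m_j$ large enough. The statement does not impose the threshold explicitly, so I will state it as an implicit requirement on $r$, or equivalently read the lemma as holding whenever $r\ge C\log(d/\delta')/\sigma_{\min}$.

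\textbf{Adding the remaining terms.} $V_{a}$ equals $\lambda I_d$ plus the forced outer products plus the exploitation outer products, and the last sum is PSD. Weyl monotonicity then gives $\lambda_{\min}(V_a)\ge\lambda+\lambda_{\min}(\sum_iY_i)\ge\lambda+\sigma_{\min}r/2$.

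\textbf{Bonus bound.} For $\|x_t\|\le1$, $x_t^\top V_a^{-1}x_t\le 1/\lambda_{\min}(V_a)$, so $s_{a,t}\le (\lambda+\sigma_{\min}r/2)^{-1/2}$. Multiplying by $\beta_t\ge0$ (Lemma~\ref{lem:ucb}) gives the stated bound.

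\textbf{Main obstacle.} The delicate step is justifying that the forced contexts are i.i.d. even though $V_a$ mixes in adaptively chosen exploitation samples. The PSD-dropping argument above sidesteps this: only the forced samples are used for the eigenvalue bound, and the exploitation samples are discarded as PSD. The other delicate step is making the Chernoff threshold, the $\tfrac12$ factor, and the union over arms and epochs (via the $\log(6Nd/\delta)$ term) consistent.
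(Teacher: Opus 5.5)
Your proposal is correct and follows the paper's proof essentially step for step. The round-robin schedule fixes the forced-round indices, so those contexts are i.i.d.; Tropp's matrix Chernoff (Theorem 5.1.1) with deviation $1/2$ then bounds $\lambda_{\min}$ of their Gram matrix, and the exploitation outer products are dropped as PSD before converting to the width bound. Your explicit threshold $r\sigma_{\min}\ge 10\log(d/\delta')$ is the more accurate form of the paper's side condition $r\ge c\,d\log(d/\delta')$: since $d\sigma_{\min}\le 1$, the paper's version implicitly needs $c$ to scale with $1/(d\sigma_{\min})$.
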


\begin{proof}
During forced exploration, arm selection is round-robin (independent of $x_t$), so arm $a$'s contexts are i.i.d.\ draws from $\mathcal{D}$. By~\ref{ass:iid}, $\mathbb{E}[x_tx_t^\top]\succeq\sigma_{\min}I_d$. A matrix Chernoff bound~\cite[Theorem~5.1.1]{tropp2012user} gives $\lambda_{\min}(\sum_{s=1}^r x_sx_s^\top)\ge\sigma_{\min}r/2$ with probability $\ge 1-\delta'$, provided $r\ge c\,d\log(d/\delta')$. Since $V_a\succeq\lambda I+\sum x_sx_s^\top$, the eigenvalue bound follows.
\end{proof}

\begin{lemma}[Bootstrap epoch]\label{lem:bootstrap}\emergencystretch=1em
Under~\ref{ass:iid}--\ref{ass:diversity}, let $M_\ell:=\sum_{j=0}^{\ell}m_j$ be the cumulative forced plays per arm through epoch~$\ell$. There exists $\ell_0$ such that for all $\ell \ge \ell_0$ and $a \in \mathcal{M}$, $\beta_t s_{a,t} < \Delta_{\min}/2$ on the confidence event.
\end{lemma}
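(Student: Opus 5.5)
The plan is to combine Lemma~\ref{lem:bonus} with the growth of the cumulative forced-exploration budget $M_\ell$ and the slow, logarithmic growth of $\beta_t$. First, lower-bound $M_\ell$ explicitly. With $\kappa=1$ we have $m_j=d(j+c_0)$, so
\[
M_\ell=\sum_{j=0}^{\ell} d(j+c_0)=d\Bigl((\ell+1)c_0+\tfrac{\ell(\ell+1)}{2}\Bigr)\ge \tfrac{d}{2}(\ell+1)^2 .
\]
In particular $M_\ell$ grows quadratically in $\ell$.

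Second, invoke Lemma~\ref{lem:bonus} for each arm $a\in\mathcal{M}$ and each epoch boundary $\ell$, taking $r=M_\ell$. Round-robin plays are independent of contexts, so arm $a$'s forced samples are i.i.d.\ from $\mathcal{D}$. The sample-size condition $r\ge c\,d\log(d/\delta')$ is met once $M_\ell\ge c\,d(\ell+c_0)$, because $c_0=\lceil\log(6Nd/\delta)\rceil$ already contains the $\log(Nd/\delta)$ factor. To make the union bound over arms and epochs summable, allocate failure probability $\delta'_{a,\ell}=\delta\,2^{-(\ell+1)}/(6N)$; its logarithm is $O(\ell+c_0)$, which matches the linear-in-$\ell$ growth of $m_\ell$. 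On the resulting event, every round $t$ in the exploitation phase of epoch $\ell$ satisfies
\[
\lambda_{\min}(V_{a,t})\ge \lambda+\sigma_{\min}M_\ell/2 .
\]
This holds because the exploitation updates only add PSD terms and the forced phase of epoch $\ell$ precedes it. Hence $\beta_t s_{a,t}\le \beta_t/\sqrt{\lambda+\sigma_{\min}M_\ell/2}$.

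Third, bound $\beta_t$ in terms of $\ell$. Any round $t$ in epoch $\ell$ satisfies $t\le n_{\ell+1}=\sum_{j\le \ell}(Nm_j+2^j)\le 2^{\ell+1}+N M_\ell$. Therefore $\log(1+t/(d\lambda))=O(\ell+\log(NM_\ell))=O(\ell)$, and
\[
\beta_t\le \sigma\sqrt{d\,O(\ell)+2\log(N/\delta)}+\sqrt\lambda .
\]
The bonus is then at most
\[
\frac{\sigma\sqrt{O(d\ell)+2\log(N/\delta)}+\sqrt\lambda}{\sqrt{\lambda+\sigma_{\min}d(\ell+1)^2/4}},
\]
whose numerator is $O(\sqrt{\ell})$ and denominator is $\Theta(\ell)$, so it decays like $\ell^{-1/2}$. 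Define $\ell_0$ as the smallest index beyond which this expression stays below $\Delta_{\min}/2$; monotonicity in $\ell$ for large $\ell$ ensures the bound holds for all $\ell\ge\ell_0$. This $\ell_0$ depends only on $(d,N,\sigma,\lambda,\sigma_{\min},\Delta_{\min},\delta)$, scaling roughly as $(\sigma^2+\lambda)/(\sigma_{\min}^2\Delta_{\min}^2)$ up to logarithmic factors.

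The step I expect to be the main obstacle is the probabilistic bookkeeping in the second step. Lemma~\ref{lem:bonus} is stated for a fixed $r$ with failure probability $\delta'$, so it must hold simultaneously for all $N$ arms and all epochs while keeping the total failure within the $\delta$ budget (here $\delta/6$, consistent with the $6$ in $c_0$). The choice $\delta'_{a,\ell}\propto\delta 2^{-\ell}/N$ handles this, since $\log(1/\delta'_{a,\ell})$ grows only linearly in $\ell$ and is dominated by $M_\ell$. A secondary subtlety is that the actual $V_{a,t}$ also contains adaptively chosen exploitation contexts. This is harmless because only a lower bound is needed: dropping those PSD terms yields exactly the i.i.d.\ forced-sample Gram matrix to which the matrix Chernoff bound applies.
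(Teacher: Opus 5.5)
Your proof is correct. It shares the paper's skeleton (forced round-robin plays $\Rightarrow$ Lemma~\ref{lem:bonus} $\Rightarrow$ bonus $\le\beta_t/\sqrt{\lambda_{\min}(V_{a,t})}$ $\Rightarrow$ $M_\ell\to\infty$), but it handles the confidence radius differently.

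The paper replaces $\beta_t$ by the horizon-dependent constant $\beta_T$. It takes $\ell_0$ to be the first epoch with $M_{\ell_0}>8\beta_T^2/(\sigma_{\min}\Delta_{\min}^2)$, and gets persistence for $\ell\ge\ell_0$ because $\lambda_{\min}(V_a)$ never decreases while $\beta_t\le\beta_T$. The Chernoff sample-size condition and the union bound over arms and epochs are deferred to the event $\mathcal{E}_{\mathrm{bonus}}$ in the proof of Theorem~\ref{thm:ed}.

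You instead bound $\beta_t\le\beta_{n_{\ell+1}}=O(\sqrt{d\ell})$ within epoch $\ell$ and compare it with $\sqrt{\sigma_{\min}M_\ell}=\Theta(\sqrt{\sigma_{\min}d}\,\ell)$. You also make the bookkeeping explicit: the allocation $\delta2^{-(\ell+1)}/(6N)$ sums to $\delta/6$, and the Chernoff requirement grows only linearly in $\ell$ against the quadratic growth of $M_\ell$.

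The paper's route is shorter, but because $\beta_T^2\asymp\sigma^2d\log T$ its $\ell_0$ grows like $\sqrt{\log T}$. The bootstrap term $2^{\ell_\star}\bar B$ is therefore $e^{O(\sqrt{\log T})}$ rather than constant. Your route gives a horizon-free $\ell_0$, in line with the anytime claim for \textsc{POLAR+}. Making $\ell_\star$ itself horizon-free would also require the same refinement in Lemma~\ref{lem:cachebt}.

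One correction to your closing remark. Balancing $\sigma\sqrt{d\ell}$ against $\sqrt{\sigma_{\min}d}\,\ell$ makes the dominant requirement $\ell\gtrsim\sigma^2/(\sigma_{\min}\Delta_{\min}^2)$, not $(\sigma^2+\lambda)/(\sigma_{\min}^2\Delta_{\min}^2)$. The $\sqrt\lambda$ and $\log(N/\delta)$ terms only force $\ell\gtrsim(\sqrt\lambda+\sigma\sqrt{\log(N/\delta)})/(\Delta_{\min}\sqrt{\sigma_{\min}d})$. Since the bootstrap cost is $2^{\ell_0}$, your $T$-free constant is exponential in $\sigma^2/(\sigma_{\min}\Delta_{\min}^2)$. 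Using $\beta_t\le\min\{\beta_T,\beta_{n_{\ell+1}}\}$ lets you keep the better of the two thresholds.
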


\begin{proof}
After $\ell+1$ epochs of forced exploration (round-robin over all $N$ arms), each arm has $\ge M_\ell$ cumulative forced plays. By Lemma~\ref{lem:bonus}, the bonus is at most $\beta_T/\sqrt{\sigma_{\min}M_\ell/2}$. This is $<\Delta_{\min}/2$ when $M_\ell>8\beta_T^2/(\sigma_{\min}\Delta_{\min}^2)$. Since $M_\ell=\Theta(d\ell^2)$ grows without bound, such $\ell_0$ exists. After epoch~$\ell_0$, each subsequent epoch contributes $m_\ell$ further forced-exploration samples to every arm, so $\lambda_{\min}(V_a)$ continues to increase regardless of the exploitation policy (since $V_a+x_tx_t^\top\succeq V_a$). Because $\beta_t\le\beta_T$ remains bounded, $\beta_t s_{a,t}\le\beta_T/\sqrt{\lambda_{\min}(V_a)}$ can only decrease, and the bonus condition persists for all subsequent epochs.
\end{proof}

\begin{lemma}[Correct arm selection]\label{lem:selection}
Under~\ref{ass:diversity}, on the confidence event of Lemma~\ref{lem:ucb}, suppose every arm's exploration bonus satisfies $\beta_t\,s_{a,t}<\Delta_{\min}/2$ for all $a\in\mathcal{M}$. Then for any context $x\in X_{a^\star}$ and any hot cache $C\subseteq\mathcal{M}_{\mathrm{hot}}$ with $|C|=K$, the LinUCB score of $a^\star$ strictly exceeds that of every other arm (including cold arms).
\end{lemma}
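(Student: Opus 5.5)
Fix $x\in X_{a^\star}$ and a hot cache $C\subseteq\mathcal{M}_{\mathrm{hot}}$ with $|C|=K$. We compare $\mathrm{score}(a^\star)$ with $\mathrm{score}(a')$ for an arbitrary $a'\neq a^\star$. The plan is to sandwich each score around its noiseless reward using the confidence event of Lemma~\ref{lem:ucb}. Then we use the margin condition (ii) of Assumption~\ref{ass:diversity} to absorb the two exploration bonuses.

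\textbf{Step 1: two-sided bounds on the scores.} On the confidence event, $|x^\top(\hat\theta_a-\theta_a^\star)|\le\beta_t s_{a,t}$. This gives, for every arm $a$,
\[
\mu(a;C;x)\le \mathrm{score}(a)\le \mu(a;C;x)+2\beta_t s_{a,t},
\]
because the cold penalty $-\alpha\lambda_a\mathbf{1}\{a\notin C\}$ appears identically in $\mathrm{score}(a)$ and in $\mu(a;C;x)$.

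\textbf{Step 2: compare the scores.} The lower bound applied to $a^\star$ and the upper bound applied to $a'$ give
\[
\mathrm{score}(a^\star)-\mathrm{score}(a')\ \ge\ \mu(a^\star;C;x)-\mu(a';C;x)-2\beta_t s_{a',t}.
\]
By the margin condition (ii), which holds for every hot cache $C$ of size $K$ and every $x\in X_{a^\star}$, the first difference is at least $\Delta_{\min}$. This covers cold arms $a'\notin\mathcal{M}_{\mathrm{hot}}$ too, since the maximum in (ii) ranges over all of $\mathcal{M}$. The hypothesis $\beta_t s_{a',t}<\Delta_{\min}/2$ then gives $2\beta_t s_{a',t}<\Delta_{\min}$. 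Hence $\mathrm{score}(a^\star)-\mathrm{score}(a')>0$.

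\textbf{Expected obstacle.} The only delicate point is bookkeeping rather than mathematics. A naive symmetric argument would pay both bonuses, costing $2(\beta_t s_{a^\star,t}+\beta_t s_{a',t})<2\Delta_{\min}$, which is not enough. One must instead use the one-sided version: optimism only lower-bounds the score of $a^\star$, so no bonus is paid for it, while the full width $2\beta_t s_{a',t}$ is paid for the competitor. With this version the threshold $\Delta_{\min}/2$ suffices exactly.

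A second check is that $X_{a^\star}$ and the margin are defined uniformly over all hot caches of size $K$. This is what lets the conclusion hold for whichever hot cache the algorithm currently holds. It also makes the inequality strict, so ties in the $\arg\max$ are excluded.
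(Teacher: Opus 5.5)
Your proposal is correct and matches the paper's proof. Both use the same chain
$\mathrm{score}(a^\star)\ge\mu(a^\star;C;x)\ge\mu(a';C;x)+\Delta_{\min}>\mu(a';C;x)+2\beta_t s_{a',t}\ge\mathrm{score}(a')$:
optimism lower-bounds only $a^\star$'s score, the full width $2\beta_t s_{a',t}$ is charged to the competitor, and the margin condition, taken over all of $\mathcal{M}$, covers cold arms.
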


\begin{proof}
For $a'\ne a^\star$ and $x\in X_{a^\star}$:
\begin{align*}
\mathrm{score}(a^\star)
&\ge\mu(a^\star;C;x)
  \ge\mu(a';C;x)+\Delta_{\min}\\
&>\mu(a';C;x)+2\beta\,s_{a'}
\ge\mathrm{score}(a').
\end{align*}
The first inequality uses confidence (UCB $\ge$ true mean), the second uses the margin on $X_{a^\star}$, the third uses $\Delta_{\min}>2\beta\,s_{a'}$, and the fourth uses confidence (score $\le$ true mean $+\,2\beta\,s$). The margin in~\ref{ass:diversity} quantifies over all $a'\ne a^\star$ (including cold arms), so score dominance holds with unrestricted routing over $\mathcal{M}$. In particular, for cold arms $a'\notin\mathcal{M}_{\mathrm{hot}}$ with $a'\notin C$, hot dominance gives $\mu(a';C;x)\le -1-\Delta_{\mathrm{dom}}$, which is far below any cached hot arm, so the margin condition is satisfied a fortiori.
\end{proof}

\begin{lemma}[Exploitation play count]\label{lem:explcount}
Under~\ref{ass:iid}--\ref{ass:diversity}, suppose the bonus condition of Lemma~\ref{lem:selection} holds throughout epoch~$\ell$ and the epoch-$\ell$ cache satisfies $C_\ell\subseteq\mathcal{M}_{\mathrm{hot}}$. Then with probability $\ge 1-\delta'$, each $a\in\mathcal{M}_{\mathrm{hot}}$ receives at least $p_{\min}\cdot 2^\ell/2$ exploitation plays.
\end{lemma}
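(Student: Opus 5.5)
The plan is to reduce the play count to a binomial count of contexts landing in $X_a$, using Lemma~\ref{lem:selection} to turn ``context in $X_a$'' into ``arm $a$ is played.'' Fix $a\in\mathcal{M}_{\mathrm{hot}}$ and consider the $H_\ell=2^\ell$ exploitation rounds of epoch~$\ell$.

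\textbf{Deterministic step.} The hypotheses give $C_\ell\subseteq\mathcal{M}_{\mathrm{hot}}$, $|C_\ell|=K$, and $\beta_t s_{a',t}<\Delta_{\min}/2$ for every arm $a'$ throughout the epoch, all on the confidence event of Lemma~\ref{lem:ucb}. Lemma~\ref{lem:selection} then says that whenever $x_t\in X_a$, the score of $a$ strictly exceeds every other score. So the router plays $a_t=a$, and ties or the $\arg\max$ rule play no role. Hence the number of exploitation plays of $a$ in epoch~$\ell$ is at least
\[
\sum_{t\in\text{exploit}(\ell)} I_t^a, \qquad I_t^a=\mathbf{1}\{x_t\in X_a\}.
\]
One point to note: $X_a$ is defined through the true $\mu$ and $\mathcal{M}_{\mathrm{hot}}$ only. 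The indicator $I_t^a$ therefore does not depend on the learner's state.

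\textbf{Probabilistic step.} By Assumption~\ref{ass:iid}, the contexts are i.i.d.\ and independent of the past, so the $I_t^a$ over the $2^\ell$ rounds are i.i.d.\ Bernoulli with mean $p_a\ge p_{\min}$, by coverage (Assumption~\ref{ass:diversity}(i)). This holds even though the epoch boundaries and $C_\ell$ are data dependent, because the exploitation window has deterministic length $2^\ell$ and fresh contexts. The multiplicative Chernoff bound gives
\[
\Pr\Bigl(\sum I_t^a < \tfrac12 p_{\min}2^\ell\Bigr)\le \exp\bigl(-p_{\min}2^\ell/8\bigr).
\]
A union bound over the at most $N$ hot arms then yields the claim with probability $\ge 1-\delta'$, provided $p_{\min}2^\ell\ge 8\log(N/\delta')$.

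\textbf{Main obstacle.} The statement asserts the bound for all $\ell$, but for small $\ell$ the Chernoff tail is not small. I would handle this by restricting to epochs $\ell\ge\ell_\star$, where $\ell_\star$ is chosen (as in Lemma~\ref{lem:bootstrap}) large enough that $2^{\ell}\ge 8\log(NL/\delta')/p_{\min}$. I would also allocate $\delta'/L$ per epoch so that the event holds simultaneously across all $L$ epochs. The bootstrap epochs are already charged separately in~\eqref{eq:refined-bound}, so this is consistent with how the lemma will be used. A secondary subtlety is the truncated final epoch with $H_\ell<2^\ell$; there I would state the count as $p_{\min}H_\ell/2$, or simply not require the lemma for that epoch.
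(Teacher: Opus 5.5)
Your proposal is correct and follows the paper's own argument: Lemma~\ref{lem:selection} turns $x_t\in X_a$ into $a_t=a$, and a multiplicative Chernoff bound on the i.i.d.\ indicators $\mathbf{1}\{x_t\in X_a\}$ gives the $p_{\min}2^\ell/2$ count, with no conditioning issue because these indicators do not depend on the learner's state. Your additional care fills in details that the paper's one-line proof omits: the union bound over hot arms, the requirement $p_{\min}2^\ell\ge 8\log(N/\delta')$ (without which the statement genuinely fails for small $\ell$, e.g.\ $\ell=0$ with two hot arms and a single exploitation round), and the truncated final epoch. This is harmless downstream provided $\ell_\star$ is enlarged to include that threshold, which only affects the lower-order bootstrap term.
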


\begin{proof}
By Lemma~\ref{lem:selection}, LinUCB selects $a$ on every context in $X_a$ during exploitation. Since contexts are i.i.d., the number of such contexts in $2^\ell$ rounds concentrates around $p_{\min}\cdot 2^\ell$. A Chernoff bound gives at least $p_{\min}\cdot 2^\ell/2$ with probability $\ge 1-\delta'$.
\end{proof}

\begin{lemma}[Uniform cache deviation]\label{lem:cacheid}
Let $\varepsilon=\max_a\|\hat\theta_a-\theta_a^\star\|_2$, $F(S):=\mathbb{E}_x[\max_a\mu(a;S;x)]$, and $\hat F_n(S):=\frac{1}{n}\sum_{t=1}^n\max_a\hat\mu(a;S;x_t)$. Define $\rho_n:=\varepsilon+\bar B\sqrt{2(K\log(eN/K)+\log(2/\delta'))/n}$. Then with probability $\ge 1-\delta'$,
\[
  \sup_{|S|\le K}|\hat F_n(S)-F(S)|\le\rho_n.
\]
Consequently, if $\hat C\in\arg\max_{|S|\le K}\hat F_n(S)$, then for any fixed cache $C_0$ with $|C_0|=K$, $F(C_0)-F(\hat C)\le 2\rho_n$.
\end{lemma}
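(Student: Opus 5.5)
The proof splits the deviation into an estimation part and a sampling part via the triangle inequality. For each $S$ with $|S|\le K$, let
\[
\tilde F_n(S):=\frac1n\sum_{t=1}^n\max_a\mu(a;S;x_t)
\]
be the empirical utility evaluated with the true parameters. Then
\[
|\hat F_n(S)-F(S)|\le|\hat F_n(S)-\tilde F_n(S)|+|\tilde F_n(S)-F(S)|,
\]
and we bound the two pieces separately. The supremum over $S$ is then handled by a union bound over the finite family of caches.

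\emph{Estimation term.} The penalty $\alpha\lambda_a\mathbf{1}\{a\notin S\}$ is identical in $\hat\mu(a;S;x)$ and $\mu(a;S;x)$. Hence
\[
|\hat\mu(a;S;x)-\mu(a;S;x)|=|\langle\hat\theta_a-\theta_a^\star,x\rangle|\le\|\hat\theta_a-\theta_a^\star\|_2\|x\|_2\le\varepsilon
\]
by Cauchy--Schwarz and $\|x\|_2\le1$. Since $\max_a$ is $1$-Lipschitz in the sup norm, we get $|\max_a\hat\mu-\max_a\mu|\le\varepsilon$ pointwise in $x$. Averaging over $t$ gives $|\hat F_n(S)-\tilde F_n(S)|\le\varepsilon$ deterministically and uniformly in $S$. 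No probability is spent here.

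\emph{Sampling term.} Fix $S$. The variables $Z_t(S):=\max_a\mu(a;S;x_t)$ are i.i.d. under Assumption~\ref{ass:iid}, with mean $F(S)$. They take values in an interval of length at most $\bar B$: the quality part lies in $[-1,1]$ and the penalty in $[0,\alpha\lambda_{\max}]$, so the range is at most $2+\alpha\lambda_{\max}=\bar B$. Hoeffding's inequality gives
\[
\Pr\bigl(|\tilde F_n(S)-F(S)|>u\bigr)\le 2\exp(-2nu^2/\bar B^2).
\]
Next, union bound over all $S$ with $|S|\le K$. Their number is $\sum_{k\le K}\binom Nk\le(eN/K)^K$. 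Setting the total failure probability to $\delta'$ and solving for $u$ gives
\[
u=\bar B\sqrt{\bigl(K\log(eN/K)+\log(2/\delta')\bigr)/(2n)},
\]
which is at most the stated $\bar B\sqrt{2(K\log(eN/K)+\log(2/\delta'))/n}$. Adding the estimation term yields $\rho_n$.

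\emph{Main obstacle.} The step that needs the most care is justifying i.i.d.-ness of the contexts used in $\hat F_n$. The averaging set consists of all historical contexts, and the contexts themselves are i.i.d. regardless of the routing actions, so Hoeffding applies. However, $\hat\theta_a$ depends on those same contexts. This is exactly why the estimation error is handled deterministically through $\varepsilon$ and not inside the concentration step: the sampling term involves only the true $\mu$, which is independent of the data. With that separation in place, no further difficulty arises.

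\emph{Consequence.} Let $\hat C$ maximize $\hat F_n$ and let $C_0$ be any size-$K$ cache. On the uniform event,
\[
F(C_0)\le\hat F_n(C_0)+\rho_n\le\hat F_n(\hat C)+\rho_n\le F(\hat C)+2\rho_n,
\]
which is the claimed bound.
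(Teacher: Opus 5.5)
Your proposal is correct and follows essentially the same route as the paper. Both arguments use a deterministic pointwise $\varepsilon$-bound on $|\max_a\hat\mu-\max_a\mu|$, then Hoeffding for each fixed cache using the true $\mu$, then a union bound over at most $(eN/K)^K$ caches, and finally the three-inequality optimizer-gap chain. The only difference is cosmetic: you apply Hoeffding with the range $\bar B$ of $\max_a\mu(a;S;x_t)$ rather than the magnitude bound $|Y_t|\le\bar B$, so your sampling term is a factor of $2$ smaller than the stated one and implies the lemma a fortiori.
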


\begin{proof}
\emph{Step 1.} $|\max_a\hat\mu(a;S;x)-\max_a\mu(a;S;x)|\le\varepsilon$ for all $S,x$ (pointwise Lipschitz of max).

\emph{Step 2.} For fixed $S$, $Y_t:=\max_a\mu(a;S;x_t)$ are i.i.d.\ with $|Y_t|\le\bar B$. By Hoeffding with per-cache failure probability $\delta'_0:=\delta'/(eN/K)^K$, $|\frac{1}{n}\sum_t Y_t-F(S)|\le\bar B\sqrt{2\log(2/\delta'_0)/n}$.

\emph{Step 3.} Union over at most $(eN/K)^K$ caches of size $\le K$; the total failure probability is at most $(eN/K)^K\cdot\delta'_0=\delta'$.

\emph{Step 4 (uniform deviation).} Combining steps 1--3: $|\hat F_n(S)-F(S)|\le\rho_n$ for all $|S|\le K$.

\emph{Step 5 (optimizer gap).} $F(C_0)\le\hat F_n(C_0)+\rho_n \le\hat F_n(\hat C)+\rho_n\le F(\hat C)+2\rho_n$.
\end{proof}

\begin{lemma}[Cache bootstrap]\label{lem:cachebt}
Under~\ref{ass:iid}--\ref{ass:diversity}, there exists $\ell_{\mathrm{cache}}$ with $M_{\ell_{\mathrm{cache}}} >\beta_T^2/(\sigma_{\min}\min(\Delta_{\mathrm{cache}},\Delta_{\mathrm{dom}})^2)$ such that for all $\ell\ge\ell_{\mathrm{cache}}$: \emph{(i)} the algorithm's cache $C_\ell:=\arg\max_{|S|=K}\hat F_{n_\ell+Nm_\ell}(S)$ satisfies $C_\ell\subseteq\mathcal{M}_{\mathrm{hot}}$, and \emph{(ii)} for every hot cache $S\subseteq\mathcal{M}_{\mathrm{hot}}$, the estimated max $\max_a\hat\mu(a;S;x)$ is achieved by a hot arm (estimated hot dominance), both with high probability.
\end{lemma}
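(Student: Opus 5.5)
We plan to combine the parameter-accuracy guarantee from forced exploration with the uniform cache-deviation bound of Lemma~\ref{lem:cacheid}, proving part (ii) first and then deducing part (i) from cache separability.

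\textbf{Step 1 (uniform parameter accuracy).} By Lemma~\ref{lem:bonus}, after $M_\ell$ cumulative forced plays per arm, $\lambda_{\min}(V_a)\ge \lambda+\sigma_{\min}M_\ell/2$ for every $a\in\mathcal{M}$, with high probability. We take a union bound over $N$ arms and $L$ epochs, which the choice $c_0=\lceil\log(6Nd/\delta)\rceil$ is designed to absorb. On the confidence event of Lemma~\ref{lem:ucb}, $\|\hat\theta_a-\theta_a^\star\|_{V_a}\le\beta_T$, so
\[
\varepsilon_{\mathrm{all}}:=\max_{a\in\mathcal{M}}\|\hat\theta_a-\theta_a^\star\|_2\le \frac{\beta_T}{\sqrt{\sigma_{\min}M_\ell/2}}.
\]
Since $M_\ell=\Theta(d\ell^2)\to\infty$, we can choose $\ell_{\mathrm{cache}}$ with $M_{\ell_{\mathrm{cache}}}$ exceeding a constant multiple of $\beta_T^2/(\sigma_{\min}\min(\Delta_{\mathrm{cache}},\Delta_{\mathrm{dom}})^2)$. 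Then $\varepsilon_{\mathrm{all}}<\min(\Delta_{\mathrm{cache}},\Delta_{\mathrm{dom}})/c$ for a suitable absolute constant $c$; we expect $c=4$ to suffice. Because each later epoch only adds forced samples, and $V_a$ only grows in Loewner order, this accuracy persists for all $\ell\ge\ell_{\mathrm{cache}}$.

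\textbf{Step 2 (part (ii): estimated hot dominance).} Fix a hot cache $S\subseteq\mathcal{M}_{\mathrm{hot}}$ and a context $x$ with $\|x\|\le1$.
\begin{itemize}
\item Every non-hot arm $a$ lies outside $S$, so $\mu(a;S;x)\le 1-\alpha\lambda_a\le -1-\Delta_{\mathrm{dom}}$.
\item Any cached hot arm $h\in S$, which exists because $K\ge1$, has $\mu(h;S;x)\ge -1$.
\item Perturbing every mean by at most $\varepsilon_{\mathrm{all}}$ gives $\hat\mu(h;S;x)-\hat\mu(a;S;x)\ge \Delta_{\mathrm{dom}}-2\varepsilon_{\mathrm{all}}>0$.
\end{itemize}
Hence the estimated maximum is attained by a hot arm, deterministically on the good event and uniformly over $x$ and $S$.

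\textbf{Step 3 (part (i): the chosen cache is hot).} We apply Lemma~\ref{lem:cacheid} with $n=n_\ell+Nm_\ell$ and this $\varepsilon_{\mathrm{all}}$. This gives $\sup_{|S|\le K}|\hat F_n(S)-F(S)|\le\rho_n$.
\begin{itemize}
\item For any $S\not\subseteq\mathcal{M}_{\mathrm{hot}}$, cache separability gives $\hat F_n(S)\le F(S)+\rho_n\le F(C^\dagger)-\Delta_{\mathrm{cache}}+\rho_n$.
\item Since $C^\dagger\subseteq\mathcal{M}_{\mathrm{hot}}$ by separability, $\hat F_n(C^\dagger)\ge F(C^\dagger)-\rho_n$.
\item So whenever $2\rho_n<\Delta_{\mathrm{cache}}$, no non-hot set can be the argmax, and therefore $C_\ell\subseteq\mathcal{M}_{\mathrm{hot}}$.
\end{itemize}
The condition $2\rho_n<\Delta_{\mathrm{cache}}$ holds if $\varepsilon_{\mathrm{all}}<\Delta_{\mathrm{cache}}/4$, which is guaranteed by Step 1. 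It also needs the Hoeffding term $\bar B\sqrt{2(K\log(eN/K)+\log(2/\delta'))/n}<\Delta_{\mathrm{cache}}/4$. This second requirement holds once $n\ge n_\ell$ is large, and $n_\ell\ge 2^{\ell}-1$ grows geometrically, so we enlarge $\ell_{\mathrm{cache}}$ if necessary. The per-epoch failure probabilities $\delta'$ are summed over at most $L$ epochs.

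\textbf{Main obstacle.} The delicate point is that Lemma~\ref{lem:cacheid} treats $\hat\theta$ as fixed while averaging over i.i.d.\ contexts, but $\hat F_n$ reuses the same historical contexts that generated $\hat\theta$. We will sidestep this with the decomposition in Lemma~\ref{lem:cacheid} itself.
\begin{itemize}
\item The estimation error enters only through the deterministic pointwise Lipschitz step (Step~1 of that lemma), which holds uniformly in $x$.
\item Concentration is applied only to the true-parameter averages $\frac1n\sum_t\max_a\mu(a;S;x_t)$. Under Assumption~\ref{ass:iid} these averages involve i.i.d.\ contexts independent of the noise, regardless of which arms were played.
\end{itemize}
Thus the dependence issue disappears, and the remaining bookkeeping is to verify that the union bounds over arms, epochs, and caches fit inside the overall $\delta$ budget.
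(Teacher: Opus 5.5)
Your proposal is correct and follows the paper's proof essentially step for step: the $\ell_2$ accuracy bound $\varepsilon_{\mathrm{all}}\le\beta_T/\sqrt{\sigma_{\min}M_\ell/2}$ from Lemma~\ref{lem:bonus} and the self-normalized ellipsoid, part (ii) from hot dominance with a $2\varepsilon_{\mathrm{all}}$ perturbation, and part (i) from Lemma~\ref{lem:cacheid} combined with cache separability. Your two additions are sound refinements of the same argument rather than a different route: you require the Hoeffding part of $\rho_n$ to drop below $\Delta_{\mathrm{cache}}/4$, which the paper's closing condition on $M_\ell$ alone does not ensure, so $\ell_{\mathrm{cache}}$ must be enlarged using $n_\ell\ge 2^\ell-1$; and you observe that reusing the same contexts is harmless because $\hat\theta$ enters only through the deterministic Lipschitz step.
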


\begin{proof}
After $\ell+1$ epochs of forced exploration, every arm $a\in\mathcal{M}$ has $\ge M_\ell$ plays. By Lemma~\ref{lem:ucb}, $\|\hat\theta_a-\theta_a^\star\|_{V_a} \le\beta_T$. By Lemma~\ref{lem:bonus}, $\lambda_{\min}(V_a)\ge\sigma_{\min}M_\ell/2$. Dividing:
\[
  \|\hat\theta_a-\theta_a^\star\|_2
  \le\frac{\beta_T}{\sqrt{\sigma_{\min}M_\ell/2}}
\]
for all $a$. Let $\varepsilon_{\mathrm{all}}:=\max_{a\in\mathcal{M}} \|\hat\theta_a-\theta_a^\star\|_2$. \emph{(i) Cache separability.} By Lemma~\ref{lem:cacheid} with $n=n_\ell+Nm_\ell$ (total observations when $C_\ell$ is selected), writing $\tilde n_\ell:=n_\ell+Nm_\ell$, $\sup_{|S|\le K}|\hat F_{\tilde n_\ell}(S)-F(S)|\le\rho_{\tilde n_\ell}$ where $\rho_{\tilde n_\ell}:=\varepsilon_{\mathrm{all}} +\bar B\sqrt{2(K\log(eN/K)+\log(2/\delta'))/\tilde n_\ell}$. When $2\rho_{\tilde n_\ell}<\Delta_{\mathrm{cache}}$, any $S\not\subseteq\mathcal{M}_{\mathrm{hot}}$ satisfies $\hat F_{\tilde n_\ell}(S)\le F(S)+\rho_{\tilde n_\ell} \le F(C^\dagger)-\Delta_{\mathrm{cache}}+\rho_{\tilde n_\ell} <F(C^\dagger)-\rho_{\tilde n_\ell}\le\hat F_{\tilde n_\ell}(C^\dagger)$. Therefore $C_\ell\subseteq\mathcal{M}_{\mathrm{hot}}$.

\emph{(ii) Estimated hot dominance.} Consider any hot cache $S\subseteq\mathcal{M}_{\mathrm{hot}}$ with $|S|=K$. (Since both the oracle and algorithm optimize over $|S|=K$ and $K\ge 1$, the cache is always nonempty.) By hot dominance~\ref{ass:diversity}, for any $x$ with $\|x\|\le 1$: the best cached hot arm in $S$ scores $\max_{a\in S}\langle\theta_a^\star,x\rangle\ge -1$, while every non-hot arm $a'\notin\mathcal{M}_{\mathrm{hot}}$ scores $\langle\theta_{a'}^\star,x\rangle-\alpha\lambda_{a'}\le 1-\alpha\min_{a'\notin\mathcal{M}_{\mathrm{hot}}}\lambda_{a'} =-1-\Delta_{\mathrm{dom}}$ (hot-but-not-cached arms remain in $\mathcal{M}_{\mathrm{hot}}$ and are allowed in the max). The true gap is $\ge\Delta_{\mathrm{dom}}$. When $2\varepsilon_{\mathrm{all}}<\Delta_{\mathrm{dom}}$, estimation shifts each score by at most $\varepsilon_{\mathrm{all}}$, so estimated hot arm scores still exceed estimated cold arm scores (the gap $\Delta_{\mathrm{dom}}-2\varepsilon_{\mathrm{all}}>0$ remains positive). Therefore $\max_a\hat\mu(a;S;x)=\max_{a\in\mathcal{M}_{\mathrm{hot}}} \hat\mu(a;S;x)$ for all nonempty hot caches $S$.

Both conditions hold for $M_\ell>c\,\beta_T^2/(\sigma_{\min} \min(\Delta_{\mathrm{cache}},\Delta_{\mathrm{dom}})^2)$.
\end{proof}

\begin{lemma}[Estimation error with exploitation plays]\label{lem:l2}
Under~\ref{ass:iid}--\ref{ass:diversity}, let $M_{a,\ell}^{\mathrm{pre}}$ denote the cumulative plays of arm $a$ available at the start of epoch~$\ell$'s exploitation phase, i.e., all previous epochs' forced and exploitation rounds plus the current epoch's forced exploration, but \emph{excluding} epoch~$\ell$'s exploitation. For $\ell\ge\ell_\star+1$ where $\ell_\star:=\max(\ell_0,\ell_{\mathrm{cache}})$ (Lemmas~\ref{lem:bootstrap} and~\ref{lem:cachebt}), $M_{a,\ell}^{\mathrm{pre}}\ge c\,p_{\min}\cdot 2^{\ell-1}$, and with probability $\ge 1-\delta/5$:
\[
  \varepsilon_\ell:=\max_{a\in\mathcal{M}_{\mathrm{hot}}}
  \|\hat\theta_{a,\ell}^{\mathrm{pre}}-\theta_a^\star\|_2
  \le\frac{c_1(\sigma\sqrt{d\log(NLT/\delta)}+\sqrt\lambda)}
  {\sqrt{\sigma_{\min}\,p_{\min}\cdot 2^{\ell-1}}}.
\]
Here $\hat\theta_{a,\ell}^{\mathrm{pre}}$ is the estimate computed from the $M_{a,\ell}^{\mathrm{pre}}$ samples available when $C_\ell$ is selected, ensuring no forward-looking dependence.
\end{lemma}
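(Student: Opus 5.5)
The plan has two parts: first lower-bound the number of full-dimensional samples each hot arm has at the start of epoch $\ell$'s exploitation phase, then convert that count into an $\ell_2$ error bound through the self-normalized confidence ellipsoid.

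\textbf{Step 1 (play count).} Fix $\ell\ge\ell_\star+1$ and consider epoch $\ell-1\ge\ell_\star$. By Lemma~\ref{lem:bootstrap} the bonus condition $\beta_t s_{a,t}<\Delta_{\min}/2$ holds throughout that epoch for every arm. By Lemma~\ref{lem:cachebt}(i), $C_{\ell-1}\subseteq\mathcal{M}_{\mathrm{hot}}$. Lemma~\ref{lem:explcount} then says each $a\in\mathcal{M}_{\mathrm{hot}}$ receives at least $p_{\min}2^{\ell-1}/2$ exploitation plays during epoch $\ell-1$. All of these plays happen on contexts $x_t\in X_a$ (Lemma~\ref{lem:selection}). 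Since epoch $\ell-1$ precedes epoch $\ell$'s exploitation, these plays are included in $M^{\mathrm{pre}}_{a,\ell}$, which gives $M^{\mathrm{pre}}_{a,\ell}\ge c\,p_{\min}2^{\ell-1}$. I will use only the $\widetilde M_{a,\ell}$ samples that are either forced or $X_a$-certified, since these are the ones with usable covariance.

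\textbf{Step 2 (eigenvalue of the design).} This is the main obstacle. Exploitation plays are selected adaptively, so they are not i.i.d. draws from $\mathcal{D}$. However, on the good event the selection rule on $X_a$ is deterministic: $a$ is played whenever $x_t\in X_a$. Hence the certified samples are exactly $\{x_t:x_t\in X_a\}$ within the exploitation rounds. Conditional on the count, these are i.i.d. from $\mathcal{D}\mid X_a$.

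I would make this rigorous by lower-bounding
\[
V_{a}\succeq \lambda I+\sum_{t}I^a_t\,x_tx_t^\top,
\]
where the sum runs over the exploitation rounds of epoch $\ell-1$. Each summand $I^a_tx_tx_t^\top$ is an i.i.d. PSD matrix whose mean is $\succeq p_{\min}\sigma_{\min}I_d$ by Assumption~\ref{ass:diversity}(i),(iii). The bound holds because every such round is in fact a play of $a$; dropping the other plays only decreases the matrix. The matrix Chernoff bound~\cite{tropp2012user} then gives
\[
\lambda_{\min}(V_a)\ge c\,\sigma_{\min}p_{\min}2^{\ell-1}
\]
with probability $1-\delta'$, provided $p_{\min}2^{\ell-1}\gtrsim d\log(d/\delta')$. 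This proviso holds for $\ell>\ell_\star$ after possibly enlarging $\ell_\star$ by a constant, or it is absorbed into $c_1$.

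The subtle point is that this uses the indicator process rather than the played-arm process, which sidesteps the adaptivity. The good events of Lemmas~\ref{lem:selection} and~\ref{lem:cachebt} are needed to assert that the indicator rounds are indeed plays of $a$.

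\textbf{Step 3 (error bound and union).} The confidence event of Lemma~\ref{lem:ucb} holds for all $t$ and gives
\[
\|\hat\theta^{\mathrm{pre}}_{a,\ell}-\theta^\star_a\|_{V_a}\le\beta_T\le \sigma\sqrt{d\log(1+T/(d\lambda))+2\log(N/\delta)}+\sqrt\lambda .
\]
This is valid at the stopping time defined by the start of exploitation. Dividing by $\sqrt{\lambda_{\min}(V_a)}$ yields the stated bound, with $\log(NLT/\delta)$ absorbing both terms.

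Finally, I would allocate failure probability $\delta'=\delta/(5NL)$ to each Chernoff event, across arms $a\in\mathcal{M}_{\mathrm{hot}}$ and epochs $\ell\le L$. The union bound then gives total failure at most $\delta/5$, and the $\log(NL/\delta)$ factor is absorbed into $c_1$. Since $\hat\theta^{\mathrm{pre}}_{a,\ell}$ uses only data collected before $C_\ell$ is chosen, no forward-looking dependence enters.
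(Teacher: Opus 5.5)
Your proposal is correct and follows the paper's route: lower-bound each hot arm's play count, lower-bound $\lambda_{\min}(V_{a,\ell}^{\mathrm{pre}})$ by a matrix Chernoff bound on forced and $X_a$-certified samples, divide the self-normalized ellipsoid bound $\beta_T$ by $\sqrt{\lambda_{\min}}$, and union-bound over arms and epochs. Applying Chernoff to the indicator-weighted i.i.d.\ matrices $I_t^a x_tx_t^\top$ over the deterministic exploitation rounds of epoch $\ell-1$ (mean $\succeq p_{\min}\sigma_{\min}I_d$) is a slightly cleaner way than the paper's ``first $\lfloor c\,p_{\min}2^{\ell-1}\rfloor$ certified samples'' device to avoid the random sample count; however, the Chernoff sample-size proviso is better met by also adding the forced samples, as the paper does (the $d\,c_0$ scaling of $m_\ell$ is there for this), than by absorbing it into the absolute constant $c_1$, which would cost an extra logarithmic factor.
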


\begin{proof}
\emph{Cumulative plays.} Epochs $j<\ell_\star$ contribute at least $m_j$ forced plays each (exploitation play counts are not guaranteed before both bonus and cache bootstrap). Each epoch $j$ with $\ell_\star\le j<\ell$ contributes $m_j$ forced plays and $\ge p_{\min}\cdot 2^j/2$ exploitation plays (Lemma~\ref{lem:explcount}, whose hypotheses hold since $j\ge\ell_0$ gives the bonus condition and $j\ge\ell_{\mathrm{cache}}$ gives $C_j\subseteq\mathcal{M}_{\mathrm{hot}}$). Epoch~$\ell$ itself contributes $m_\ell$ forced plays (but its exploitation has not yet occurred). Hence $M_{a,\ell}^{\mathrm{pre}}\ge M_\ell +\frac{p_{\min}}{2}\sum_{j=\ell_\star}^{\ell-1}2^j \ge c\,p_{\min}\cdot 2^{\ell-1}$, using $\sum_{j=\ell_\star}^{\ell-1}2^j =2^\ell-2^{\ell_\star}\ge 2^{\ell-1}$ for $\ell\ge\ell_\star+1$.

\emph{Design matrix.} Both forced and exploitation plays from \emph{previous} epochs contribute to $V_{a,\ell}^{\mathrm{pre}}$. Define the indicator $I_t^a:=\mathbf{1}\{x_t\in X_a\}$. Since $X_a$ is a fixed set and $x_t\overset{\text{i.i.d.}}{\sim}\mathcal{D}$, the variables $\{I_t^a\}$ are i.i.d.\ Bernoulli with $\mathbb{E}[I_t^a]\ge p_{\min}$. We identify two sources of independent samples for the design matrix:

\emph{(a) Forced exploration:} Round-robin arm assignment is predetermined (independent of $\{x_t\}$), so forced-exploration contexts for arm~$a$ are i.i.d.\ from $\mathcal{D}$.

\emph{(b) Exploitation from $X_a$:} On the confidence event with bonus $<\Delta_{\min}/2$ (which holds for $j\ge\ell_\star$), Lemma~\ref{lem:selection} gives: $I_t^a=1\Rightarrow a_t=a$. Since $I_t^a$ depends only on $x_t$ (not on the algorithm's state), the samples $\{x_t:I_t^a=1\}$ form an i.i.d.\ draw from $\mathcal{D}(\cdot\mid X_a)$, with $\mathbb{E}[x_tx_t^\top\mid x_t\in X_a]\succeq\sigma_{\min}I_d$ by~\ref{ass:diversity} (full-dimensional). Moreover, $V_{a,\ell}^{\mathrm{pre}}$ includes these samples because $a_t=a$ on $\{I_t^a=1\}$. Since $X_a$ is a fixed (non-adaptive) set, thinning an i.i.d.\ sequence by the criterion $x_t\in X_a$ preserves i.i.d.-ness of the selected sub-sequence. Combined with forced-exploration samples (also i.i.d.), Tropp's matrix Chernoff for independent (not necessarily identically distributed) matrices applies.

Let $\widetilde M_{a,\ell}$ denote the number of full-dimensional plays (forced plays plus exploitation plays with $I_t^a=1$). By the coverage and play-count bounds, $\widetilde M_{a,\ell}\ge M_\ell +\frac{p_{\min}}{2}\sum_{j=\ell_\star}^{\ell-1}2^j \ge c\,p_{\min}\cdot 2^{\ell-1}$. A matrix Chernoff bound~\cite[Theorem~5.1.1]{tropp2012user} applied to the first $\lfloor c\,p_{\min}\cdot 2^{\ell-1}\rfloor$ of these independent samples (a deterministic count), each with covariance $\succeq\sigma_{\min}I_d$, gives $\lambda_{\min}(V_{a,\ell}^{\mathrm{pre}})\ge\lambda +\sigma_{\min}\widetilde M_{a,\ell}/c$; additional samples only increase $\lambda_{\min}$.

\emph{$\ell_2$ bound.} The self-normalized bound (Lemma~\ref{lem:ucb}) gives $\|\hat\theta_{a,\ell}^{\mathrm{pre}}-\theta_a^\star\| _{V_{a,\ell}^{\mathrm{pre}}}\le\beta_T$ (using $\beta_t\le\beta_T$ for all $t\le T$). Dividing by $\sqrt{\lambda_{\min}(V_{a,\ell}^{\mathrm{pre}})}$, which is ${\ge}\sqrt{\sigma_{\min}p_{\min}\cdot 2^{\ell-1}/c}$, yields the result. Union bound over $N$ arms and $L$ epochs.
\end{proof}

\begin{proof}[Proof of Theorem \ref{thm:ed}.]
Define $\mathcal{E}:=\mathcal{E}_{\mathrm{ucb}}\cap \mathcal{E}_{\mathrm{bonus}}\cap \mathcal{E}_{\mathrm{counts}}\cap \mathcal{E}_{\mathrm{cache}}\cap \mathcal{E}_{\mathrm{dev}}$, where $\mathcal{E}_{\mathrm{ucb}}$ is the confidence event of Lemma~\ref{lem:ucb} (failure $\le\delta/6$) and the remaining four are per-epoch events (bonus, play-count, cache-deviation, Hoeffding), each with per-epoch failure $\le\delta/(6\cdot 2^\ell)$ (geometric allocation; total $\le 4\cdot\delta/6$). By union bound, $\Pr(\mathcal{E})\ge 1-\delta$. All subsequent analysis is deterministic on $\mathcal{E}$.

Let $H_\ell:=\min\{2^\ell,\,T-n_\ell-Nm_\ell\}$ denote the number of exploitation rounds in epoch~$\ell$ (equal to $2^\ell$ except possibly the last truncated epoch), and let $L:=\lceil\log_2 T\rceil$. Each epoch consists of $Nm_\ell$ forced-exploration rounds followed by $H_\ell$ exploitation rounds. Let $|F|:=\sum_{\ell=0}^L Nm_\ell$ denote the total forced-exploration rounds. Since $\max_a\mu_t(a;C_T^\star)\le 1$ and $\mu_t(a_t;C_{\ell(t)})\ge -1-\alpha\lambda_{\max}$, each forced round incurs per-round regret at most $\bar B$, contributing at most $|F|\bar B$ in total. \emph{All subsequent analysis applies only to exploitation rounds.}

\textbf{Regret decomposition (exploitation rounds).} Let
\[
C_T^\star:=\arg\max_{|C|=K}\sum_t\max_a\mu_t(a;C)
\]
denote the hindsight-optimal cache. By Lemma~\ref{lem:cacheid} with $\varepsilon=0$ and failure probability $\delta':=\delta/(2c)$,
\[
\sup_{|C|=K}\Bigl|\tfrac{1}{T}\textstyle\sum_t
\max_a\mu(a;C;x_t)-F(C)\Bigr|
\le\bar B\sqrt{\tfrac{2\bigl(K\log\frac{eN}{K}
+\log\frac{2}{\delta'}\bigr)}{T}}\;=:\;\rho_T.
\]
Hence $R^\star(T)\le T\,F(C^\dagger)+T\rho_T$. Let $T_E:=\sum_\ell H_\ell$ denote the total exploitation rounds. Writing $TF(C^\dagger)=|F|\,F(C^\dagger)+T_E\,F(C^\dagger)$ and splitting by round type:
\begin{align*}
\mathrm{Regret}(T)
&\le\underbrace{[|F|\,F(C^\dagger)
-\textstyle\sum_{t\in F}\mu_t(a_t;C_{\ell(t)})]}_{\le\,|F|\bar B}\\
&\quad+\bigl[T_E\,F(C^\dagger)
-\textstyle\sum_{t\in E}\mu_t(a_t;C_\ell)\bigr]
+T\rho_T+\text{switching}.
\end{align*}
The exploitation-round contribution satisfies
\begin{align*}
  \mathrm{Regret}_E
  &\le\underbrace{\sum_{t\in E}[\max_a\mu_t(a;C_\ell)-\mu_t(a_t;C_\ell)]}_{\text{routing}}\\
  &\quad+\underbrace{T_E\,F(C^\dagger)
    -\sum_{t\in E}\max_a\mu_t(a;C_\ell)}_{\text{cache gap}}
  +T\rho_T,
\end{align*}
where the sum $\sum_{t\in E}$ runs over exploitation rounds only. The cache gap decomposes per epoch:
\[
\sum_\ell\bigl[H_\ell\,F(C^\dagger)
-\sum_{t\in\mathrm{exploit}\,\ell}\max_a\mu_t(a;C_\ell)\bigr].
\]
Each per-epoch term splits as
\begin{align*}
&H_\ell[F(C^\dagger)-F(C_\ell)]\\
&\quad+\bigl[H_\ell\,F(C_\ell)
-\textstyle\sum_{t\in\mathrm{exploit}\,\ell}
\max_a\mu_t(a;C_\ell)\bigr],
\end{align*}
where the first part is the \emph{expected} cache gap and the second is a realized-vs-expected deviation (controlled in absolute value via Hoeffding). The $T\rho_T=\mathcal{O}(\bar B\sqrt{KT\log(eN/K)})$ benchmark concentration is absorbed into the cache concentration term below.

\textbf{Routing:} Over exploitation rounds, Lemma~\ref{lem:routing} gives
\[
2\beta_T\sqrt{2NdT_E\log(1{+}T/(d\lambda))}
\le 2\beta_T\sqrt{2NdT\log(1{+}T/(d\lambda))}.
\]

\textbf{Bootstrap phase ($\ell<\ell_\star$):} For $\ell<\ell_\star:=\max(\ell_0,\ell_{\mathrm{cache}})$, either the bonus condition or the cache separability condition may fail, so we cannot apply the refined per-epoch cache-gap analysis. The routing loss during these epochs is already covered by Lemma~\ref{lem:routing} above; the remaining \emph{cache gap} is at most $F(C^\dagger)-\max_a\mu_t(a;C_\ell)\le 2$ per round (since $F(C^\dagger)\le 1$, and $|C_\ell|\ge 1$ guarantees a cached arm with reward $\ge -1$, so $\max_a\mu_t\ge -1$), contributing $\sum_{\ell=0}^{\ell_\star-1}2\cdot H_\ell \le 2\cdot 2^{\ell_\star}\le 2^{\ell_\star}\bar B$ to the total. After $\ell_\star$, Lemma~\ref{lem:bootstrap} ensures the bonus condition $\beta_t s_{a,t}<\Delta_{\min}/2$ holds for all arms, so Lemma~\ref{lem:selection} guarantees that each hot arm $a$ is selected on its certified region $X_a$; this suffices for Lemma~\ref{lem:explcount} to give geometric growth of informative plays. Lemma~\ref{lem:cachebt} ensures $C_\ell\subseteq\mathcal{M}_{\mathrm{hot}}$.

\textbf{Cache gap at epoch $\ell \ge \ell_\star$:} By Lemma~\ref{lem:bootstrap}, the bonus condition holds for all arms. By Lemma~\ref{lem:selection}, each hot arm is correctly selected on its region $X_a$, and Lemma~\ref{lem:explcount} gives play counts for previous epochs. By Lemma~\ref{lem:cachebt}, $C_\ell\subseteq\mathcal{M}_{\mathrm{hot}}$. Since both $C^\dagger$ and $C_\ell$ lie in $\mathcal{M}_{\mathrm{hot}}$ (the former by cache separability, the latter by Lemma~\ref{lem:cachebt}), Lemma~\ref{lem:cachebt}(ii) ensures \emph{estimated} hot dominance: $\max_a\hat\mu(a;S;x)=\max_{a\in\mathcal{M}_{\mathrm{hot}}}\hat\mu(a;S;x)$ for all hot caches $S$ (because the all-arm estimation error is below $\Delta_{\mathrm{dom}}/2$, so the true dominance gap $\Delta_{\mathrm{dom}}$ cannot be bridged by estimation noise). Similarly, true hot dominance gives $\max_a\mu(a;S;x)=\max_{a\in\mathcal{M}_{\mathrm{hot}}}\mu(a;S;x)$. Therefore, for every nonempty hot cache $S\subseteq\mathcal{M}_{\mathrm{hot}}$,
\[
  \hat F_n(S)=\hat F_n^{\mathrm{hot}}(S),\qquad
  F(S)=F_{\mathrm{hot}}(S),
\]
where $F_{\mathrm{hot}}(S):=\mathbb{E}_x[\max_{a\in\mathcal{M}_{\mathrm{hot}}} \mu(a;S;x)]$ and $\hat F_n^{\mathrm{hot}}$ is its empirical counterpart. In particular, this holds for $S=C^\dagger$ and $S=C_\ell$ (both of size $K\ge 1$ by definition). Lemma~\ref{lem:cacheid} (applied to the hot-restricted empirical utility with $n=n_\ell+Nm_\ell$ observations and $|\mathcal{M}_{\mathrm{hot}}|\le N$ arms, so the same bound remains valid) gives
\begin{align*}
&F(C^\dagger)-F(C_\ell)
=F_{\mathrm{hot}}(C^\dagger)-F_{\mathrm{hot}}(C_\ell)\\
&\quad\le 2\varepsilon_\ell
+2\bar B\sqrt{\tfrac{2(K\log(eN/K)+\log(12L/\delta))}{n_\ell+Nm_\ell}},
\end{align*}
where $\varepsilon_\ell:=\max_{a\in\mathcal{M}_{\mathrm{hot}}} \|\hat\theta_{a,\ell}^{\mathrm{pre}}-\theta_a^\star\|_2 \le c_1/\sqrt{p_{\min}\cdot 2^{\ell-1}}$ (Lemma~\ref{lem:l2}).

\textbf{Realized vs.\ expected:} $C_\ell$ is determined by the history up to round $n_\ell+Nm_\ell$; epoch-$\ell$ exploitation contexts are conditionally i.i.d.\ with $|\max_a\mu(a;C_\ell;x_t)|\le\bar B$. By Hoeffding, the per-epoch deviation
\[
\bigl|H_\ell F(C_\ell)
-\textstyle\sum_{t\in\mathrm{exploit}\,\ell}
\max_a\mu_t(a;C_\ell)\bigr|
\le\bar B\sqrt{2H_\ell\log(6L/\delta)}
\]
with probability $\ge 1-\delta/(6L)$. Summing over epochs,
\[
\sum_\ell\bar B\sqrt{2H_\ell\log(6L/\delta)}
\le\bar B\sqrt{2\log(6L/\delta)}\cdot(2{+}\sqrt{2})\sqrt{T}.
\]
Since $K\ge 1$, this is dominated by the cache concentration term and is absorbed into the constant~$c_3$.

\textbf{Summing the estimation term over epochs $\ell\ge\ell_\star$:} The per-epoch estimation contribution is
\[
H_\ell\cdot 2\varepsilon_\ell
=\frac{2^{\ell+1} c_1}{\sqrt{p_{\min}\cdot 2^{\ell-1}}}
=\mathcal{O}\!\left(\sqrt{2^\ell/p_{\min}}\right).
\]
The geometric series $\sum_{\ell=\ell_\star}^{L-1}\sqrt{2^\ell/p_{\min}} \le(2{+}\sqrt{2})\sqrt{T/p_{\min}}$ yields the cache estimation term.

\textbf{Summing the concentration term over epochs $\ell\ge\ell_\star$:} By Lemma~\ref{lem:cacheid} with $\delta'=\delta/(6L)$, the per-epoch concentration contribution to the cache gap is
\[
H_\ell\cdot\bar B\sqrt{\tfrac{2(K\log(eN/K)+\log(12L/\delta))}{n_\ell+Nm_\ell}}.
\]
Since $n_\ell=\sum_{j<\ell}(Nm_j+2^j)\ge\sum_{j<\ell}2^j=2^\ell-1 \ge 2^{\ell-1}$, we have $H_\ell/\sqrt{n_\ell+Nm_\ell}\le\sqrt{2}\cdot\sqrt{2^\ell}$. Summing the geometric series: $\sum_{\ell=\ell_\star}^{L-1}\sqrt{2^\ell} \le(2{+}\sqrt{2})\sqrt{T}$. Combining yields the cache concentration term $c_3\bar B\sqrt{KT\log(eNL/(K\delta))}$, where the $\log L$ factor arises from the per-epoch failure probability $\delta/(6\cdot 2^\ell)$.

\textbf{Switching:} $K\gamma$ per epoch, $L$ epochs.

\textbf{Forced exploration:} Epoch~$\ell$ contributes $Nm_\ell$ rounds at per-round pseudo-regret $\le\bar B$. Total: $\sum_\ell Nm_\ell\bar B=\mathcal{O}(Nd\bar B\log^2 T)$, which is $o(T)$.

\textbf{Collecting all terms.} Combining the six contributions yields the detailed bound:
\begin{align*}
  \mathrm{Regret}(T)
  &\le\underbrace{2\beta_T\sqrt{2NdT\log(1{+}\tfrac{T}{d\lambda})}}_{\text{routing}}
  +\underbrace{\tfrac{c_2(\sigma\sqrt{d\log(NLT/\delta)}+\sqrt\lambda)}
    {\sqrt{\sigma_{\min}\,p_{\min}}}\sqrt{T}}_{\text{cache est.}}\\
  &\quad+\underbrace{c_3\bar B\sqrt{KT\log\tfrac{eNL}{K\delta}}}_{\text{cache conc.}}
  +\underbrace{K\gamma L}_{\text{switch}}
  +\underbrace{\textstyle\sum_\ell Nm_\ell\bar B}_{\text{forced}}
  +\underbrace{2^{\ell_\star}\bar B}_{\text{bootstrap}},
\end{align*}
where $L=\lceil\log_2 T\rceil$ and $\ell_\star=\max(\ell_0,\ell_{\mathrm{cache}})$ with
\[
\ell_0:\; M_{\ell_0} \ge \frac{c\,\beta_T^2}{\sigma_{\min}\,\Delta_{\min}^2},\quad
\ell_{\mathrm{cache}}:\; M_{\ell_{\mathrm{cache}}} \ge
\frac{c\,\beta_T^2}{\sigma_{\min}\,
\min(\Delta_{\mathrm{cache}},\Delta_{\mathrm{dom}})^2}.
\]
For fixed problem parameters, the switching, forced, and bootstrap terms are all $o(\sqrt{T})$, so the leading behavior is $\widetilde{\mathcal{O}}(\sigma d\sqrt{NT}+\bar B\sqrt{KT})$.
\end{proof}

\paragraph{Proof summary.}
The six terms correspond to routing (Lemma~\ref{lem:routing}), cache estimation and concentration (Lemma~\ref{lem:cacheid} per epoch), switching ($K\gamma$ per epoch), forced exploration ($\sum_\ell Nm_\ell$ rounds), and bootstrap ($2^{\ell_\star}$ rounds before bonus and cache conditions hold). Assumption~\ref{ass:diversity} enables the chain: margin $\to$ correct selection (Lemma~\ref{lem:selection}), coverage $\to$ play counts (Lemma~\ref{lem:explcount}), full-dimensionality $\to$ design matrix growth (Lemma~\ref{lem:l2}).

\end{document}